\documentclass{article}
\usepackage{dilab_arxiv}
\usepackage{enumitem}
\usepackage{flafter}
\usepackage{wrapfig}
\usepackage{placeins}
\usepackage{needspace}

\theoremstyle{plain}
\newtheorem{theorem}{Theorem}[section]
\newtheorem{lemma}[theorem]{Lemma}
\newtheorem{proposition}[theorem]{Proposition}
\newtheorem{corollary}[theorem]{Corollary}
\theoremstyle{remark}
\newtheorem{remark}[theorem]{Remark}
\theoremstyle{plain}

\newcommand{\KL}{D_{\mathrm{KL}}}
\renewcommand{\eqref}[1]{equation~\ref{#1}}

\AddToHook{env/theorem/begin}{\samepage}
\AddToHook{env/lemma/begin}{\samepage}
\AddToHook{env/proposition/begin}{\samepage}
\AddToHook{env/corollary/begin}{\samepage}

\title{One Proposal for Every Margin: Zero-Shot Amortized Sequential Importance Sampling for Binary Matrices}
\runningtitle{MarginFlow: One Proposal for Every Margin}
\date{arXiv preprint, September 2026}
\paperlogo{\includegraphics[height=1.5cm]{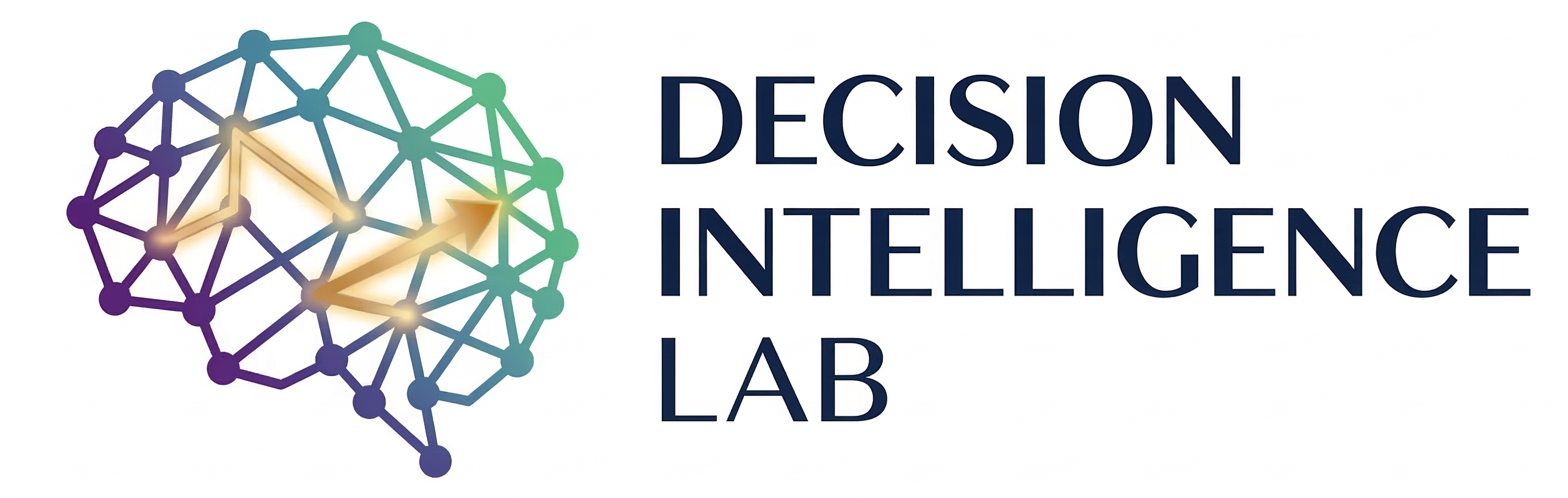}}
\author{
  Ruishuo Chen$^{1}$, Weijia Li$^{2}$, Xun Wang$^{1}$,
  Yu Chen$^{1}$, Leheng Cai$^{2}$, Longbo Huang$^{1\,\text{\faEnvelope}}$
  \\[0.3em]\normalfont
  $^1$Institute for Interdisciplinary Information Sciences, Tsinghua University\\
  $^2$Department of Statistics and Data Science, Tsinghua University\\
  \text{\faEnvelope}\ Correspondence:
  \href{mailto:longbohuang@tsinghua.edu.cn}{\texttt{longbohuang@tsinghua.edu.cn}}
}
\hypersetup{pdftitle={One Proposal for Every Margin: Zero-Shot Amortized Sequential Importance Sampling for Binary Matrices},pdfauthor={Ruishuo Chen, Weijia Li, Xun Wang, Yu Chen, Leheng Cai, Longbo Huang}}

\begin{document}
\maketitle
\thispagestyle{fancy}

\begin{abstract}
In ecology, psychometrics, and the analysis of social and financial networks, binary matrices are often analyzed conditional on their observed row and column sums, which restricts the problem to a finite sample space of matrices with the same margins. Two fundamental problems are to count this space and to sample uniformly from it. Sequential importance sampling (SIS) addresses both with independent weighted samples and an unbiased count estimator, but its efficiency depends critically on the proposal distribution. Existing proposals are analytically designed, and their accuracy can vary substantially with the margins. We show that the ideal SIS proposal, under which every weight equals the count and the variance vanishes, is exactly the policy of a generative flow network (GFlowNet) with unit reward on every matrix that has the given margins. We therefore propose MarginFlow, a framework that turns the design of the proposal into a learning problem and amortizes it across margins by exploiting their self-similarity. Every partial matrix is itself an instance with reduced margins, so one set transformer that reads the remaining margins serves every margin. We train MarginFlow on a pool of 1904 margins and evaluate it zero-shot on 1190 held-out margins, synthetic and real, from $3\times3$ to $870\times6$. On 1187 of the 1190 margins it matches or beats the best of 31 analytically designed configurations, chosen post hoc for each margin, and its median effective sample fraction is 99.8\%. On the 56 margins where that best loses more than one nat of effective sample size, MarginFlow wins every one and raises the median effective sample fraction from 10.3\% to 94.1\%.
\end{abstract}

\section{Introduction}
\label{sec:intro}

Given prescribed row sums \(r\) and column sums \(c\), let
\[
\Omega(r,c)
=
\{X\in\{0,1\}^{m\times n}:
X\mathbf 1=r,\ X^\top\mathbf 1=c\}
\]
denote the finite sample space of binary matrices with these margins.
Two fundamental computational problems are to evaluate its cardinality
\(Z(r,c)=|\Omega(r,c)|\) and to sample uniformly from it. Ecologists compare the co-occurrence patterns observed across a set of sites with random binary matrices that preserve every species' prevalence and every site's richness \citep{connor1979assembly,neal2024pattern}. 
The same comparison is routine wherever a binary table is judged against its own row and column totals, in psychometrics
\citep{chen2005exact,draxler2025testing}, social network analysis
\citep{snijders1991enumeration,neal2025stopping}, cancer genomics
\citep{gobbi2014fast}, and the study of financial and ecological networks
\citep{glasserman2023maximum,sun2025efficient}.

\looseness=-1
The two computational tasks are closely linked through completion counts.
If the first \(i-1\) rows have been fixed, each feasible next row defines a child of the current partial matrix, and exact uniform sampling selects that
child with probability proportional to its number of valid completions. Exact dynamic programming exploits this
recursion to provide both counting and uniform sampling, but its state space grows exponentially with the number of columns, so even moderate sizes are out of reach
\citep{miller2013exact}. Markov-chain methods instead generate matrices with the prescribed
margins and approximate the uniform distribution
\citep{verhelst2008efficient,gotelli2012statistical,strona2014fast,
fosdick2018configuring,fu2026spectral,nie2026snake}.
Their draws are typically correlated, and the chains do not directly
estimate the number of feasible matrices.

\begin{figure}[t]
\centering
\includegraphics[width=\linewidth]{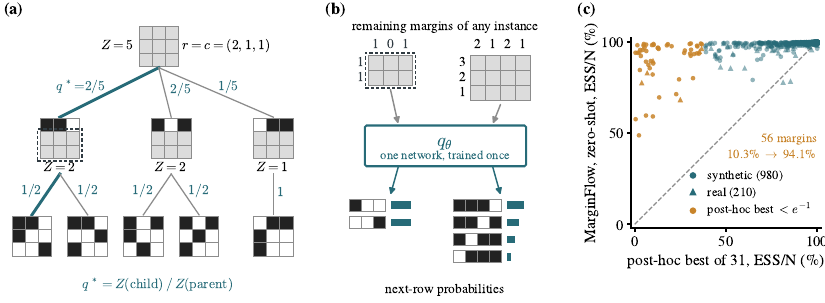}
\caption{(a) The five binary matrices with margins $r=c=(2,1,1)$, built row by row. Each state carries its number of completions $Z$, and the ratio $q^*$ on each edge is the zero-variance proposal: along the bold path $q^*(X)=\frac25\cdot\frac12=\frac15$, so the weight $1/q^*(X)$ is the count $5$. (b) The outlined remainder is itself an instance with margins $r'=(1,1)$, $c'=(1,0,1)$; one set transformer reads the remaining margins of any instance and returns next-row probabilities. (c) Effective sample fraction of MarginFlow, zero-shot, against the post-hoc best of 31 configurations on 1190 held-out margins; circles are synthetic margins, triangles real ones, amber the 56 where that best loses over one nat.}
\label{fig:intro}
\end{figure}

\looseness=-1
Sequential importance sampling (SIS) instead addresses both tasks with
independent weighted samples and an unbiased estimator of the count
\citep{snijders1991enumeration,chen2005sequential}. It builds a matrix row by
row from a proposal over feasible next rows and weights each completed matrix by the reciprocal of its proposal probability. Its efficiency depends critically on the proposal, whose accuracy can vary substantially with the margins \citep{harrison2013importance}. On some margin families the mismatch is large enough that SIS requires
exponentially many draws to avoid severe underestimation
\citep{bezakova2012negative}. A long line of work has therefore refined the analytic proposal
\citep{blanchet2009efficient,blitzstein2011sequential,harrison2013importance,glasserman2023maximum}. These proposals differ in how they approximate the completion counts behind the ideal next-row probabilities. We instead learn those probabilities directly from the proposal's own draws.

In this paper, we view the proposal through the lens of generative flow networks (GFlowNets) \citep{bengio2021flow,dasilva2025right}, amortized samplers that build objects stepwise and end at each in proportion to its reward. We show that, with unit reward on every matrix that has margins $(r,c)$, the flow at any partial matrix equals its number of completions, while the flow at the initial state equals the total count. Consequently, selecting each child in proportion to its flow yields exactly the ideal zero-variance SIS proposal, as Figure~\ref{fig:intro}(a) shows. A GFlowNet learns this policy by enforcing flow consistency on the matrices it samples itself \citep{whitammer2022trajectory,tiapkin2024entropy,fawkes2026ftb}, so the proposal can be learned without the count ever being known.

Training a separate GFlowNet per margin, however, costs far more than any analytically designed proposal. We therefore propose MarginFlow, which amortizes the learning across all margins by exploiting their self-similarity. Once $k$ rows are filled, what remains is the same problem on the remaining rows and reduced column sums, so every partial matrix met while sampling is itself an instance, as Figure~\ref{fig:intro}(b) outlines. A set transformer that reads the remaining margins and scores a candidate row is therefore a proposal for every margin at once. We train one such transformer over the column sums on a pool of 1904 margins from six synthetic families and published ecological, mutualistic-network, and psychometric tables, with every test dataset held out by source.

We then evaluate it zero-shot on 1190 held-out margins, synthetic and real, from $3\times3$ to $870\times6$. The baseline is the best of 31 analytically designed proposal configurations, chosen post hoc for each margin. That choice takes all 31 runs and an effective sample size estimated from their draws, and the estimate misses the rare heavy weights behind the exponential underestimate above, so the baseline is stronger than any analytically designed proposal a user can run.

MarginFlow matches or beats this post-hoc best on 1187 of 1190 margins in Figure~\ref{fig:intro}(c), with a median effective sample fraction of 99.8\%. On the 56 margins where that best loses more than one nat, it wins every one and lifts the median from 10.3\% to 94.1\%. One proposal, learned once and never tuned, thus takes over from the analytically designed proposals and the choice among them.

Our contributions are as follows.
\begin{itemize}[leftmargin=1.2em,itemsep=2pt,topsep=2pt]
\item We establish the equivalence between the zero-variance SIS proposal and the forward policy of a GFlowNet with unit reward on every binary matrix that has the given margins, whose total flow is the number of such matrices. This turns analytic proposal construction into a learning problem.
\item We propose MarginFlow, which amortizes this learning across all margins through the self-similarity of the problem, since every partial matrix is itself an instance. One set transformer that reads the remaining margins is trained on a pool of margins. It then serves zero-shot as the proposal on margins it has never seen, with no per-instance training, tuning, or selection.
\item On 1190 held-out margins MarginFlow matches or beats the post-hoc best of 31 analytically designed configurations on all but three, with a median effective sample fraction of 99.8\%. On the 56 margins where that post-hoc best loses more than one nat, it keeps a median of 94.1\% and wins every one, so the learned proposal holds where analytical design gives out.
\end{itemize}

\section{Preliminaries}
\label{sec:prelim}

\subsection{Sequential importance sampling for fixed margins}
\label{sec:sis}

Recall that \(\Omega(r,c)\) denotes the set of \(m\times n\) binary matrices
with row sums \(r\) and column sums \(c\), and let
\(Z=|\Omega(r,c)|\). We assume throughout that
\(\Omega(r,c)\neq\varnothing\).
 Uniform draws from $\Omega(r,c)$ are the null distribution of a conditional test, and $Z$ is the normalizing constant of a likelihood conditioned on the margins \citep{rasch1960probabilistic,chen2005exact,harrison2013importance}.

Sequential importance sampling (SIS) obtains the count and the draws from one procedure that builds a matrix $X\in\Omega(r,c)$ row by row \citep{snijders1991enumeration,chen2005sequential}. Once rows $x_1,\dots,x_{i-1}$ are placed, the partial matrix $X_{<i}$ leaves an instance with margins $(r_i,\dots,r_m)$ and $c-\sum_{k<i}x_k$, and we write $Z(X_{<i})$ for its number of completions, so that $Z(X_{<1})=Z$. A row $x_i\in\{0,1\}^n$ with $|x_i|=r_i$ is feasible if $Z(X_{\le i})>0$, which a Gale--Ryser test on the reduced margins decides without counting \citep{chen2005sequential}.

The sampling step draws each row from a proposal $q(x_i\mid X_{<i})$ over the feasible rows, so a finished matrix is drawn with probability $q(X)=\prod_{i=1}^m q(x_i\mid X_{<i})$ rather than with the uniform probability $1/Z$. The importance step corrects for this by attaching to the draw the weight $w(X)=1/q(X)$. If $q$ gives positive probability to every feasible row, then $\mathbb{E}_q[w]=\sum_{X\in\Omega(r,c)}q(X)/q(X)=Z$, so the mean weight over $N$ draws is an unbiased estimate of the count, and the draws reweighted by $w$ estimate any expectation under the uniform distribution, the p-value among them.

The estimate is unbiased whatever the proposal, but its precision is set by the variance of the weights, and the usual summary of that variance is the effective sample fraction
\begin{equation}
\frac{\mathrm{ESS}}{N}=\frac{\bigl(\sum_{\ell=1}^N w_\ell\bigr)^2}{N\sum_{\ell=1}^N w_\ell^2}.
\label{eq:ess}
\end{equation}
The effective sample size $\mathrm{ESS}$ approximates the number of equally weighted draws with the same Monte Carlo precision as the $N$ weighted draws. Thus, improving SIS amounts to increasing $\mathrm{ESS}/N$, which is one for constant weights and approaches $1/N$ when a single weight dominates.

One proposal makes all the weights equal. Taking each row with probability proportional to the number of completions it leaves,
\begin{equation}
q^*(x_i\mid X_{<i})=\frac{Z(X_{\le i})}{Z(X_{<i})},
\label{eq:qstar}
\end{equation}
telescopes along any matrix to $q^*(X)=Z(X_{\le m})/Z(X_{<1})=1/Z$, so every weight is exactly $Z$, as Figure~\ref{fig:intro}(a) traces on a $3\times3$ example. Evaluating $q^*$, however, is as hard as the count itself, since every $Z(X_{<i})$ is a count of the same kind \citep{miller2013exact}.

Every classical proposal is therefore a closed-form stand-in for \eqref{eq:qstar} computed from the reduced margins \citep{chen2005sequential,harrison2013importance}. How closely it tracks $q^*$ depends on the margins, and where it falls short the effective sample fraction collapses and the count can be underestimated by an exponential factor \citep{bezakova2012negative}. In this paper, we instead learn the stand-in, a network trained as a GFlowNet policy that returns $q(x_i\mid X_{<i})$ for any reduced margins.

\subsection{Generative flow networks}
\label{sec:gfn}

A GFlowNet is trained to sample objects with probability proportional to a reward \citep{bengio2021flow,bengio2023gflownet}. An object is built from an initial state $s_0$ by a sequence of actions, each moving to a child state in a directed acyclic graph, until a terminal state $x$ is reached, where a reward $R(x)>0$ is given. A forward policy $P_F(s'\mid s)$ assigns probabilities to the children of every state, and the goal is a policy that ends at $x$ with probability $R(x)/Z_R$, where $Z_R=\sum_x R(x)$.

Flows describe such a policy. Assign to every state a flow $F(s)$ and to every edge a flow $F(s\to s')$ such that inflow equals outflow at every state but $s_0$ and the outflow of a terminal state is its reward. Then $F(s_0)=Z_R$, and the policy $P_F(s'\mid s)=F(s\to s')/F(s)$ has the required terminal distribution \citep{bengio2021flow}. When the graph is a tree, as when states record their history, each has one parent, the edge flow into $s'$ is $F(s')$, and $F(s)$ is the total reward of the terminals below it.

\looseness=-1
Training needs no flow values, only the reward of each sampled terminal state, and neither objective below parameterizes $F(s)$ for $s\neq s_0$. Trajectory balance \citep{whitammer2022trajectory} parameterizes the policy $P_F(\cdot\mid s;\theta)$ together with a scalar $Z_\theta$ and asks every complete trajectory $\tau=(s_0\to\dots\to x)$, with $P_F(\tau;\theta)=\prod_t P_F(s_{t+1}\mid s_t;\theta)$, to satisfy $Z_\theta P_F(\tau;\theta)=R(x)$, the form the condition takes on a tree. The log-variance objective, VarGrad \citep{richter2020vargrad}, is the same condition with $\log Z_\theta$ replaced by its optimal value for a batch of $B$ trajectories, the mean of the log-ratio $\log R(x)-\log P_F(\tau;\theta)$, so that the loss is the variance of that log-ratio across the batch,
\begin{equation}
\begin{aligned}
\mathcal{L}_{\mathrm{TB}}(\tau)&=\bigl(\log Z_\theta+\log P_F(\tau;\theta)-\log R(x)\bigr)^2,\\
\mathcal{L}_{\mathrm{LV}}(\tau_{1:B})&=\operatorname{Var}_{b}\bigl[\log R(x_b)-\log P_F(\tau_b;\theta)\bigr].
\end{aligned}
\label{eq:tb}
\end{equation}
Both vanish on every trajectory exactly when $P_F(\tau;\theta)=R(x)/Z_R$, with $\log Z_\theta$, or the mean log-ratio, equal to $\log Z_R$. The second learns no partition function, which matters when one network serves instances with their own $Z_R$ \citep{zhang2023robust}. The next section uses it with reward one.

\section{MarginFlow: one proposal for all margins}
\label{sec:method}

This section builds MarginFlow in two steps. Section~\ref{sec:flow} recasts SIS for fixed margins as a GFlowNet, so that the ideal proposal becomes a policy learned from its own draws, and Section~\ref{sec:network} designs one network that learns it for all margins at once.

\subsection{Counting as a flow}
\label{sec:flow}

In this subsection we examine SIS for fixed margins as a GFlowNet whose reward is one on every matrix with the given margins. We prove that the zero-variance proposal is the flow-proportional policy, that the trajectory-balance residual of any proposal is its log weight up to a constant, and that the log-variance loss of Section~\ref{sec:gfn} is the variance of the log weights. We then show what training does while the count stays unknown and how the loss relates to the effective sample fraction we report. Proofs are in Appendix~\ref{app:proofs}, and Appendix~\ref{app:rows} shows how row errors add up over a matrix.

Fix margins $(r,c)$. The states are the partial matrices $X_{<i}$ for $i=1,\dots,m+1$, with the empty matrix $X_{<1}$ as $s_0$. The actions at $X_{<i}$ are the feasible rows $x_i$, the terminal states are the complete matrices $X\in\Omega(r,c)$, and every terminal state has reward $R(X)=1$. A partial matrix records the rows placed so far, so each state has one parent and the graph is a tree. The Gale--Ryser test keeps every trajectory inside $\Omega(r,c)$, so every trajectory reaches a complete matrix. Throughout, $p$ is the uniform distribution on $\Omega(r,c)$, $p(X)=1/Z$. A policy $q$ over feasible rows draws a matrix with probability $q(X)=\prod_i q(x_i\mid X_{<i})$ and gives it the weight $w(X)=1/q(X)$, as in Section~\ref{sec:sis}.

\begin{lemma}[Flows count completions]
\label{lem:flow}
The flow through a partial matrix is its number of completions, $F(X_{<i})=Z(X_{<i})$, and the total flow $Z_R=F(X_{<1})$ is the count $Z$. The flow-proportional policy $P_F(x_i\mid X_{<i})=Z(X_{\le i})/Z(X_{<i})$ is the zero-variance proposal $q^*$.
\end{lemma}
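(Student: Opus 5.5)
The plan is a backward induction on the depth of the tree described above, using only the flow characterization from Section~\ref{sec:gfn}: on a tree, the flow into a state equals the total reward of the terminal states below it. I will first make precise which terminals lie below a partial matrix. A complete matrix $X\in\Omega(r,c)$ lies below $X_{<i}$ exactly when its first $i-1$ rows are $x_1,\dots,x_{i-1}$. Because of the Gale--Ryser restriction, every action leads to a state from which some completion exists. So the terminals below $X_{<i}$ are in bijection with the completions of $X_{<i}$, that is, with matrices having the reduced margins $(r_i,\dots,r_m)$ and $c-\sum_{k<i}x_k$. Since every terminal has reward one, $F(X_{<i})=\sum_{X\text{ below }X_{<i}}R(X)=Z(X_{<i})$.

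To avoid leaning only on the quoted characterization, I will also check directly that this assignment is a valid flow, by induction from $i=m+1$ downward. For the base case, a complete matrix has flow $R(X)=1=Z(X_{\le m})$, since its unique completion is empty. For the inductive step, the flow conservation condition at $X_{<i}$ requires $F(X_{<i})=\sum_{x_i}F(X_{\le i})$, where the sum runs over feasible rows. This holds because the completions of $X_{<i}$ are partitioned according to their $i$-th row, which gives $Z(X_{<i})=\sum_{x_i\text{ feasible}}Z(X_{\le i})$. Infeasible rows contribute zero and are excluded from the action set. On a tree the inflow to each non-root state is the single edge flow, so setting $F(X_{<i}\to X_{\le i})=F(X_{\le i})$ makes inflow equal outflow. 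Taking $i=1$ gives $Z_R=F(X_{<1})=Z(X_{<1})=Z$.

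The last claim then follows by substitution. The flow-proportional policy is $P_F(x_i\mid X_{<i})=F(X_{<i}\to X_{\le i})/F(X_{<i})=Z(X_{\le i})/Z(X_{<i})$, which is exactly \eqref{eq:qstar}. The telescoping argument of Section~\ref{sec:sis} then gives $q^*(X)=1/Z$, so every weight equals $Z$ and the variance is zero.

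The only real subtlety is the bookkeeping that justifies the bijection and the partition. The feasible-row restriction must neither drop a completion nor admit a dead-end state. Both facts follow from defining feasibility as $Z(X_{\le i})>0$, so I expect no genuine obstacle.
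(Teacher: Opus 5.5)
Your proof is correct and follows the paper's route: both use the fact that on a tree the flow through a state is the total reward of the terminals below it. Both then identify those terminals with the completions of $X_{<i}$ and read off the policy from the edge flow $F(X_{\le i})$. Your backward-induction check that $Z(X_{<i})=\sum_{x_i}Z(X_{\le i})$ satisfies flow conservation is a self-contained addition; the paper simply cites the tree characterization.
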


The lemma is the picture in Figure~\ref{fig:intro}(a). Each node carries its number of completions, and the flow-proportional policy divides that number among the children. It also says why $q^*$ is out of reach, since every flow value is itself a count. The next result extends the correspondence to any proposal, good or bad, and shows that the GFlowNet losses measure what SIS cares about.

\begin{theorem}[SIS is a GFlowNet with reward one]
\label{thm:sis}
Let $q_\theta$ be any policy that gives every feasible row positive probability, with weights $w_\theta(X)=1/q_\theta(X)$.
\begin{enumerate}[label=(\roman*),leftmargin=2em,itemsep=1pt,topsep=2pt]
\item For every $X\in\Omega(r,c)$, the trajectory-balance condition of \eqref{eq:tb} reads $Z_\theta\,q_\theta(X)=1$, and its residual is $\log Z_\theta-\log w_\theta(X)$.
\item For matrices $X_1,\dots,X_B$ drawn from $q_\theta$, the two losses of \eqref{eq:tb} are
\begin{equation}
\begin{aligned}
\mathcal{L}_{\mathrm{TB}}(X_b)&=\bigl(\log Z_\theta-\log w_\theta(X_b)\bigr)^2,\\
\mathcal{L}_{\mathrm{LV}}(X_{1:B})&=\operatorname{Var}_b\bigl[\log w_\theta(X_b)\bigr].
\end{aligned}
\label{eq:lossw}
\end{equation}
\item $\operatorname{Var}_{q_\theta}[\log w_\theta(X)]=0$ if and only if $q_\theta=q^*$, in which case every weight equals $Z$.
\end{enumerate}
\end{theorem}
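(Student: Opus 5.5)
Parts (i) and (ii) follow by substitution into \eqref{eq:tb}; part (iii) rests on a telescoping argument built on Lemma~\ref{lem:flow}.

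For (i), the state graph is a tree and every terminal has reward $R(X)=1$. Each matrix $X\in\Omega(r,c)$ therefore has exactly one trajectory $\tau_X=(X_{<1}\to X_{<2}\to\dots\to X_{\le m})$, whose forward probability is $P_F(\tau_X;\theta)=\prod_i q_\theta(x_i\mid X_{<i})=q_\theta(X)$. The trajectory-balance condition $Z_\theta P_F(\tau;\theta)=R(x)$ becomes $Z_\theta q_\theta(X)=1$. Its log residual is $\log Z_\theta+\log q_\theta(X)-\log 1=\log Z_\theta-\log w_\theta(X)$, since $w_\theta=1/q_\theta$.

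Part (ii) follows by squaring this residual for $\mathcal{L}_{\mathrm{TB}}$. For $\mathcal{L}_{\mathrm{LV}}$, the log-ratio $\log R(X_b)-\log P_F(\tau_b;\theta)$ equals $-\log q_\theta(X_b)=\log w_\theta(X_b)$, so the batch variance is the variance of the log weights.

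For (iii), the direction ``$q_\theta=q^*$ implies zero variance'' comes from Lemma~\ref{lem:flow} and the telescoping in \eqref{eq:qstar}. They give $q^*(X)=1/Z$ for every $X$, so $\log w\equiv\log Z$ is constant.

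For the converse, suppose $\operatorname{Var}_{q_\theta}[\log w_\theta]=0$.
\begin{itemize}
\item Because $q_\theta$ gives every feasible row positive probability, and every trajectory of feasible rows ends in $\Omega(r,c)$, we have $q_\theta(X)>0$ for all $X\in\Omega(r,c)$.
\item Zero variance therefore forces $w_\theta(X)$ to equal one constant $C$ on all of $\Omega(r,c)$, not merely almost surely.
\item Summing $q_\theta(X)=1/C$ over $\Omega(r,c)$, where the $q_\theta$-probabilities total one, gives $C=Z$. So $q_\theta(X)=1/Z$ for every $X$.
\end{itemize}

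The main step is to recover the row-level conditionals from this matrix-level statement. I would induct backward, or simply marginalize. For any feasible partial matrix $X_{\le i}$, sum $q_\theta(X)=1/Z$ over all completions $X$ of $X_{\le i}$. The conditional probabilities of the later rows sum to one, so the left side collapses to $\prod_{k\le i}q_\theta(x_k\mid X_{<k})$, and the right side is $Z(X_{\le i})/Z$. Taking the ratio of the expressions for $X_{\le i}$ and $X_{<i}$ gives $q_\theta(x_i\mid X_{<i})=Z(X_{\le i})/Z(X_{<i})=q^*(x_i\mid X_{<i})$ at every reachable state. Every feasible state is reachable with positive probability, so $q_\theta=q^*$ and every weight equals $Z$.

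The obstacle here is only bookkeeping: that collapse needs the conditionals at deeper states to sum to one over feasible rows. This holds because $q_\theta$ is supported on the feasible rows, and the Gale--Ryser test guarantees that every feasible state has at least one feasible child.
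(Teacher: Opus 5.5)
Your proof is correct and takes the same route as the paper. Parts (i) and (ii) follow by substituting $R\equiv 1$ and $P_F(\tau_X;\theta)=q_\theta(X)$ into the trajectory-balance quantities; for (iii), full support turns zero variance into $q_\theta(X)=1/Z$ on all of $\Omega(r,c)$, and your marginalization over the completions of $X_{\le i}$ spells out the step the paper compresses into ``the tree has one path to each $X$, so $q_\theta=q^*$ at every state.''
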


In words, a GFlowNet trained on this tree is an SIS sampler whose loss is the spread of its own log weights. Analytically designed proposals try to keep this spread small, and only $q^*$ removes it.

The loss of Theorem~\ref{thm:sis} is a statistic of one batch. The next theorem, the on-policy equivalence of trajectory balance and reverse KL \citep{richter2020vargrad,whitammer2023gflownets}, says what it optimizes in expectation. Let $H(q_\theta)$ be the entropy of $q_\theta$ over $\Omega(r,c)$, the distribution of the draws.

\begin{theorem}[Training maximizes the entropy of the draws]
\label{thm:grad}
Let $q_\theta$ be differentiable in $\theta$ with full support, and let the $B$ matrices be drawn independently from $q_\theta$ and held fixed in the differentiation, as in on-policy training. Then
\begin{equation}
\mathbb{E}\bigl[\nabla_\theta\mathcal{L}_{\mathrm{LV}}\bigr]
=2\tfrac{B-1}{B}\,\nabla_\theta\KL\bigl(q_\theta\,\|\,p\bigr)
=-2\tfrac{B-1}{B}\,\nabla_\theta H(q_\theta),
\label{eq:grad}
\end{equation}
and the batch mean of $\log w_\theta$ is an unbiased estimate of $H(q_\theta)=\log Z-\KL(q_\theta\,\|\,p)$.
\end{theorem}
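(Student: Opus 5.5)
We will prove Theorem~\ref{thm:grad} by writing $\ell_\theta(X)=\log w_\theta(X)=-\log q_\theta(X)$. By Theorem~\ref{thm:sis}(ii), $\mathcal{L}_{\mathrm{LV}}$ is the empirical variance of $\ell_\theta(X_1),\dots,\ell_\theta(X_B)$. We take it with the $1/B$ normalization, $\mathcal{L}_{\mathrm{LV}}=\frac1B\sum_b(\ell_b-\bar\ell)^2$ with $\bar\ell=\frac1B\sum_b\ell_b$, since this is what produces the factor $\tfrac{B-1}{B}$.

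The first step is to differentiate with the samples held fixed. The centering terms cancel because $\sum_b(\ell_b-\bar\ell)=0$, which leaves
\[
\nabla_\theta\mathcal{L}_{\mathrm{LV}}=\tfrac{2}{B}\sum_b(\ell_b-\bar\ell)\,\nabla_\theta\ell_b .
\]
Next we substitute $\ell_b-\bar\ell=\tfrac{B-1}{B}\ell_b-\tfrac1B\sum_{b'\neq b}\ell_{b'}$ and take expectations using independence. The diagonal part gives $\tfrac{B-1}{B}\,\mathbb{E}_q[\ell\nabla\ell]$. The cross terms factor as $\mathbb{E}[\ell_{b'}]\,\mathbb{E}[\nabla\ell_b]$. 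We then use the score identity $\mathbb{E}_q[\nabla_\theta\log q_\theta]=\nabla_\theta\sum_X q_\theta(X)=0$; it is valid because the support $\Omega(r,c)$ is finite and fixed and $q_\theta$ has full support. This identity gives $\mathbb{E}[\nabla\ell_b]=0$, so every cross term vanishes. Summing over $b$, we get
\[
\mathbb{E}[\nabla\mathcal{L}_{\mathrm{LV}}]=2\tfrac{B-1}{B}\,\mathbb{E}_q[\ell\,\nabla\ell].
\]

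Then we identify this expectation with the gradient of the KL divergence. Since $p=1/Z$, we have $\KL(q_\theta\|p)=\sum_X q_\theta\log q_\theta+\log Z=-H(q_\theta)+\log Z$. Differentiating gives $\nabla\KL=\sum_X(\nabla q_\theta)(\log q_\theta+1)$. The $+1$ term vanishes because $\sum_X\nabla q_\theta=0$. What remains is $\mathbb{E}_q[\log q_\theta\,\nabla\log q_\theta]$, and because $\ell=-\log q_\theta$ and $\nabla\ell=-\nabla\log q_\theta$, this equals $\mathbb{E}_q[\ell\,\nabla\ell]$. This proves the first equality in \eqref{eq:grad}. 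The second equality follows from $\nabla\KL=-\nabla H$, since $\log Z$ does not depend on $\theta$. Finally, $\mathbb{E}_q[\bar\ell]=\mathbb{E}_q[-\log q_\theta]=H(q_\theta)$, and the identity $H=\log Z-\KL$ is the same rearrangement used above.

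The main obstacle is the cross-term bookkeeping that produces exactly $\tfrac{B-1}{B}$, together with fixing the variance convention. With the unbiased $1/(B-1)$ normalization the factor would instead be $2$, so the argument must adopt the $1/B$ convention explicitly. Everything else is the standard score-function identity on a finite support.
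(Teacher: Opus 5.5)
Your proof is correct and follows essentially the same route as the paper's. You differentiate the $1/B$-normalized batch variance with the draws held fixed, remove the cross terms with the score identity $\mathbb{E}_{q_\theta}[\nabla_\theta\log q_\theta]=0$ to get the factor $\tfrac{B-1}{B}$, and identify $\mathbb{E}_{q_\theta}[\ell\,\nabla_\theta\ell]$ with $\nabla_\theta\KL(q_\theta\,\|\,p)$ because $\log p=-\log Z$ is constant. Your remark that the $1/(B-1)$ normalization would give a factor of $2$ is also right, and it matches the paper's explicit choice of the $1/B$ convention.
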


With $p$ uniform the divergence is $\log Z-H(q_\theta)$, so each expected update raises the entropy of the draws, which is largest when they are uniform on $\Omega(r,c)$.

\begin{remark}[Why the count comes from SIS]
\label{rem:count}
A GFlowNet trained with trajectory balance also learns a scalar $\log Z_\theta$, which looks like a free estimate of the count. For a given policy, however, trajectory balance fits $\log Z_\theta$ to the mean log weight, which by Theorem~\ref{thm:grad} is $\log Z-\KL(q_\theta\,\|\,p)$, so the fitted value falls short of $\log Z$ by whatever divergence training has not yet removed. The mean weight of SIS is unbiased under any policy with full support, so we take the proposal from the GFlowNet and the count from SIS. Because $\log Z$ enters \eqref{eq:grad} only as a constant of each instance, one network can also train on a pool of margins, with no $\log Z_\theta$ per margin ever needed.
\end{remark}

The loss is measured on the network's own draws, and we report the effective sample fraction of \eqref{eq:ess}. The following standard facts of importance sampling \citep{kong1994sequential,agapiou2017importance} relate the two through the R\'enyi divergence of order two, $D_2(p\,\|\,q)=\log\sum_{X}p(X)^2/q(X)$.

\begin{proposition}[The loss and the effective sample fraction]
\label{prop:ess}
For a policy $q$ with full support on $\Omega(r,c)$ and $N$ independent draws from it, as $N\to\infty$,
\begin{equation}
\frac{\mathrm{ESS}}{N}\to\frac{Z^2}{\mathbb{E}_q[w^2]}=e^{-D_2(p\,\|\,q)},
\label{eq:essd2}
\end{equation}
and the relative mean squared error of the count estimate from $N$ draws is $(e^{D_2(p\,\|\,q)}-1)/N$. If $q=p(1+h)$, where $h$ is the relative deviation of $q$ from $p$ and $\mathbb{E}_p[h]=0$, then $D_2(p\,\|\,q)$, $\operatorname{Var}_q[\log w]$ and $2\KL(q\,\|\,p)$ all equal $\mathbb{E}_p[h^2]$ to second order in $h$, so they agree near $q^*$.
\end{proposition}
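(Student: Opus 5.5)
We treat the three claims of Proposition~\ref{prop:ess} separately. Each is an exact identity for a finite sample space, except the last, which is a Taylor expansion.

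\textbf{The limit of the effective sample fraction.} We divide numerator and denominator of \eqref{eq:ess} by $N^2$, which gives
\[
\frac{\mathrm{ESS}}{N}=\frac{\bigl(\frac1N\sum_\ell w_\ell\bigr)^2}{\frac1N\sum_\ell w_\ell^2}.
\]
The space $\Omega(r,c)$ is finite and $q$ has full support, so $w$ takes finitely many finite values. By the strong law of large numbers, the numerator converges almost surely to $(\mathbb{E}_q[w])^2=Z^2$, using unbiasedness from Section~\ref{sec:sis}. The denominator converges to $\mathbb{E}_q[w^2]=\sum_X 1/q(X)$. Since $p=1/Z$,
\[
\sum_X \frac{p(X)^2}{q(X)}=\frac{\mathbb{E}_q[w^2]}{Z^2},
\]
and this gives $e^{-D_2(p\,\|\,q)}$. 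For the relative mean squared error, the estimator $\hat Z=\frac1N\sum w_\ell$ is unbiased, so its relative MSE is
\[
\frac{\operatorname{Var}_q[w]}{NZ^2}=\frac{\mathbb{E}_q[w^2]/Z^2-1}{N}=\frac{e^{D_2}-1}{N}.
\]
Both steps are routine.

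\textbf{The second-order agreement.} We write $q=p(1+h)$ with $\mathbb{E}_p[h]=0$, so that $q$ sums to one. Each quantity is then expanded in $h$, keeping terms through order two.

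\emph{$D_2$.} We have
\[
\sum_X \frac{p^2}{q}=\mathbb{E}_p\bigl[(1+h)^{-1}\bigr]=1-\mathbb{E}_p[h]+\mathbb{E}_p[h^2]+O(h^3)=1+\mathbb{E}_p[h^2]+O(h^3).
\]
Taking the logarithm gives $\mathbb{E}_p[h^2]+O(h^3)$.

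\emph{$2\KL$.} We have
\[
\KL(q\,\|\,p)=\mathbb{E}_p\bigl[(1+h)\log(1+h)\bigr]=\mathbb{E}_p\bigl[h+\tfrac12h^2\bigr]+O(h^3)=\tfrac12\mathbb{E}_p[h^2]+O(h^3),
\]
so twice it agrees with $D_2$ to second order.

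\emph{$\operatorname{Var}_q[\log w]$.} Here $\log w=\log Z-\log(1+h)$, and the constant drops out of the variance. This leaves $\operatorname{Var}_q[\log(1+h)]$. Since $\log(1+h)=h+O(h^2)$ and the variance is quadratic, the variance equals $\operatorname{Var}_q[h]+O(h^3)$. We then replace $\mathbb{E}_q$ by $\mathbb{E}_p$ at the cost of a factor $(1+h)$, which only affects third-order terms. Using $\mathbb{E}_p[h]=0$, this yields $\mathbb{E}_p[h^2]+O(h^3)$.

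\textbf{Main obstacle.} The only delicate point is this last bookkeeping. One must check that the mixed expectations $\mathbb{E}_q[h]=\mathbb{E}_p[h^2]$ and the $O(h^2)$ correction in $\log(1+h)$ contribute only at cubic order to the variance. The term $(\mathbb{E}_q[h])^2$ is $O(h^4)$, and the cross term between $h$ and $h^2$ is $O(h^3)$, so both are harmless. We make ``to second order'' precise by taking $h=\varepsilon g$ for a fixed $g$ and letting $\varepsilon\to0$. The remainders are then uniformly $O(\varepsilon^3)$, because $\Omega(r,c)$ is finite and $h$ is bounded.
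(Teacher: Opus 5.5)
Your proof is correct and follows the paper's argument. It uses the law of large numbers for the limit, the identity $\mathbb{E}_q[w^2]/Z^2=\sum_X p(X)^2/q(X)$ with unbiasedness for the relative error, and second-order expansions of $\mathbb{E}_p[(1+h)^{-1}]$ and $\mathbb{E}_p[(1+h)\log(1+h)]$. The only cosmetic difference is in the variance term: you reduce $\operatorname{Var}_q[\log(1+h)]$ to $\operatorname{Var}_q[h]$ before changing measure, whereas the paper expands $\mathbb{E}_q[\log(1+h)]$ and $\mathbb{E}_q[\log^2(1+h)]$ separately, and both give $\mathbb{E}_p[h^2]$ up to cubic terms.
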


In the limit the effective sample fraction is thus a R\'enyi divergence, $-\log(\mathrm{ESS}/N)=D_2(p\,\|\,q)$ in nats, and the relative error of the count is set by it. Close to the optimum, the training loss of Theorem~\ref{thm:sis}, the divergence of Theorem~\ref{thm:grad} and the effective sample fraction are one quantity, so the GFlowNet is trained on the precision of the SIS count.

\subsection{One network for all margins}
\label{sec:network}

We now design MarginFlow's network around the self-similarity and symmetry of the problem, so that one network trained with the loss of Section~\ref{sec:flow} serves as the proposal for all margins. Figure~\ref{fig:method} draws the design, from the row types at one state to one proposal step and one training step.

\begin{figure}[t]
\centering
\includegraphics[width=0.8\linewidth]{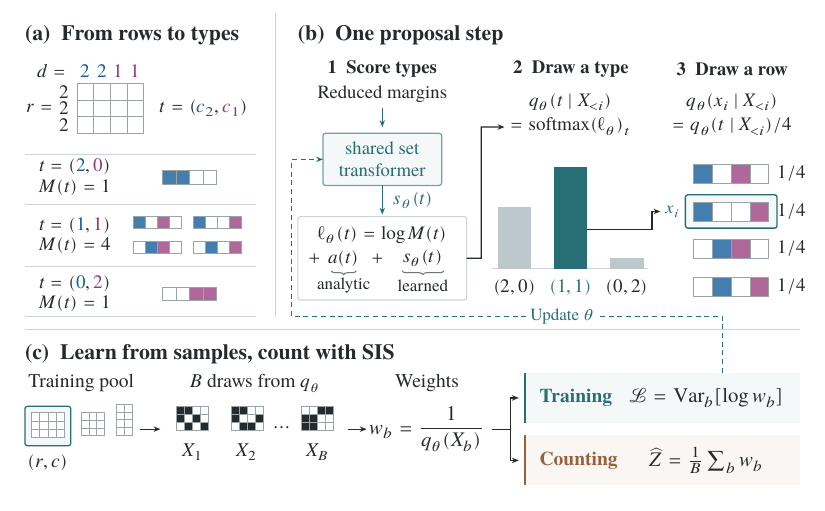}
\caption{(a) At a state with remaining row sums $(2,2,2)$ and reduced column sums $(2,2,1,1)$, the six feasible rows fall into three types by their numbers of ones in the columns of each reduced sum. (b) One step of the proposal, the logit of \eqref{eq:logit} for every feasible type, a softmax over the types, and a uniform draw within the chosen type. (c) Training minimizes the variance of the log weights of $B$ draws with one margin from the pool, and SIS averages the same weights for the count.}
\label{fig:method}
\end{figure}

\begin{proposition}[The optimal policy reads the reduced margins]
\label{prop:reduced}
Let a partial matrix $X_{<i}$ have remaining row sums $r_{\ge i}=(r_i,\dots,r_m)$ and reduced column sums $d=c-\sum_{k<i}x_k$. Then $q^*(x_i\mid X_{<i})$ depends on $X_{<i}$ only through $(r_{\ge i},d)$, and it is unchanged by permuting the rows after row $i$. For every column permutation $\pi$, $q^*(\pi x_i\mid r_{\ge i},\pi d)=q^*(x_i\mid r_{\ge i},d)$. In particular, feasible rows with the same number of ones in the columns of each reduced sum are equally likely.
\end{proposition}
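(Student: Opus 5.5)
The plan is to reduce everything to one observation: the completion count $Z(X_{<i})$ is a function of the reduced instance alone. Once rows $x_1,\dots,x_{i-1}$ are placed, the completions of $X_{<i}$ are in bijection with $\Omega(r_{\ge i},d)$, the binary $(m-i+1)\times n$ matrices with row sums $r_{\ge i}$ and column sums $d=c-\sum_{k<i}x_k$. The bijection simply appends or deletes the fixed top block. Hence $Z(X_{<i})=|\Omega(r_{\ge i},d)|$. In the same way, placing $x_i$ gives $Z(X_{\le i})=|\Omega(r_{>i},d-x_i)|$. Substituting into \eqref{eq:qstar} expresses $q^*(x_i\mid X_{<i})$ as $|\Omega(r_{>i},d-x_i)|/|\Omega(r_{\ge i},d)|$. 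This quantity involves only $(r_{\ge i},d)$ and the candidate row, which proves the first claim. Feasibility, via Gale--Ryser on the reduced margins, is likewise a function of $(r_{\ge i},d,x_i)$.

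For the row-permutation invariance, I would permute the rows after row $i$ by $\sigma$, keeping $r_i$ in place so that $x_i$ still has $|x_i|=r_i$. The map $Y\mapsto\sigma Y$ is then a bijection between $\Omega(r_{>i},d-x_i)$ and $\Omega(\sigma r_{>i},d-x_i)$. The numerator therefore does not change. The denominator is handled the same way: $|\Omega(r_{\ge i},d)|=\sum_{x_i}|\Omega(r_{>i},d-x_i)|$, and each term of this sum is invariant, so the denominator is too. For a column permutation $\pi$, the map $Y\mapsto Y P_\pi$ (permuting columns) is a bijection from $\Omega(r',d')$ onto $\Omega(r',\pi d')$, and it sends the row $x_i$ to $\pi x_i$. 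Therefore $|\Omega(r_{>i},\pi d-\pi x_i)|=|\Omega(r_{>i},d-x_i)|$ and $|\Omega(r_{\ge i},\pi d)|=|\Omega(r_{\ge i},d)|$, which gives the stated equality.

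For the final claim, I would take two feasible rows $x,x'$ that have the same number of ones in each level set $\{j:d_j=v\}$. Then some permutation $\pi$ maps $x$ to $x'$ and preserves every level set, so $\pi d=d$. Applying the column-permutation identity with this $\pi$ gives $q^*(x'\mid r_{\ge i},d)=q^*(\pi x\mid r_{\ge i},\pi d)=q^*(x\mid r_{\ge i},d)$. The step that needs the most care is this construction of $\pi$. It has to be done level set by level set: within each level set, send the ones of $x$ to the ones of $x'$ and the zeros to the zeros, which is possible because the counts agree. Everything else consists of routine bijections between margin classes.
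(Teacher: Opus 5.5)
Your proposal is correct and follows the paper's proof essentially step for step. Like the paper, you identify $Z(X_{<i})$ and $Z(X_{\le i})$ with $|\Omega(r_{\ge i},d)|$ and $|\Omega(r_{>i},d-x_i)|$, apply the row- and column-permutation bijections on these margin classes, and get the type claim from a column permutation that fixes $d$; you build that permutation explicitly, level set by level set, where the paper only asserts it. The one difference is minor: you obtain invariance of the denominator by summing over $x_i$, while the paper applies the row-permutation bijection to $\Omega(r_{\ge i},d)$ directly with the first row held fixed, which is equally valid and a line shorter.
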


The proposition lets one network that reads reduced margins act at every state of every instance. The symmetry reduces the work further. At a state with reduced column sums $d$, let $h_v$ be the number of columns $j$ with $d_j=v$, and for a row $x_i$ let $c_v(x_i)$ be the number of its ones in those columns. We call $t(x_i)=(c_v(x_i))_v$ the type of the row, and Figure~\ref{fig:method}(a) sorts the rows of one state by type. By the proposition, $q^*(x_i\mid X_{<i})$ depends on $x_i$ only through its type, and the $M(t)=\prod_v\binom{h_v}{c_v}$ rows of a type $t$ are all feasible or all infeasible, which the Gale--Ryser test decides once per type.

We therefore let the network output a distribution $q_\theta(t\mid X_{<i})$ over the feasible types, which shrinks its output from up to $\tbinom{n}{r_i}$ rows to the far fewer types, and we draw a row uniformly among the $M(t)$ rows of the chosen type, the step that Figure~\ref{fig:method}(b) traces. A matrix is then drawn with log-probability
\begin{equation}
\log q_\theta(X)=\sum_{i=1}^m\Bigl[\log q_\theta\bigl(t(x_i)\mid X_{<i}\bigr)-\log M\bigl(t(x_i)\bigr)\Bigr].
\label{eq:type}
\end{equation}

We design the logit of a feasible type as
\begin{equation}
\ell_\theta(t)=\log M(t)+a(t)+s_\theta(t)
\label{eq:logit}
\end{equation}
and $q_\theta(t\mid X_{<i})$ is the softmax of $\ell_\theta$ over the feasible types. The term $\log M(t)$ counts the rows a type holds, a combinatorial quantity computed exactly outside the network. The term $a(t)$ is the log weight that the proposal of \citet{harrison2013importance} gives to each row of type $t$, written out in Appendix~\ref{app:prior} and ablated in Appendix~\ref{app:ablation}, and it gives training a strong prior to explore from. The term $s_\theta(t)$ is what the network learns. The softmax gives every feasible type positive probability, and the uniform draw within a type passes it on to every matrix in $\Omega(r,c)$. The mean importance weight under any fixed trained policy $q_\theta$ thus remains an unbiased estimator of $Z$, learning the proposal changes only the spread of the weights around $Z$, and Theorem~\ref{thm:sis}(iii) applies.

\looseness=-1
The network is a set transformer \citep{lee2019set} that reads the remaining instance as a set of tokens and returns $s_\theta(t)$. Let $R=m-i+1$ be the number of rows still to place and $n'$ the number of columns with $d_j>0$. The reduced column sums enter as one token per distinct value $v$, carrying $v$ and the number $h_v$ of columns with that sum. The remaining row sums enter in the same way, one token per distinct value and the number of rows that have it, and the current row sum $r_i$ has a token of its own. These tokens hold their values as fractions of $R$ and $n'$, so they describe the shape of the remaining instance, and one last token carries $R$ and $n'$ themselves.

Four pre-norm attention layers of width 256, with no positional encoding, turn the tokens into embeddings, so the symmetry of Proposition~\ref{prop:reduced} holds by construction. This pass runs once per state, and every feasible type at that state shares it. A small head then scores each type $t$ from the embeddings of the values $v$ it touches together with its counts $c_v$, so the cost of a step is one encoder pass and one head evaluation per type. We initialize the last layer of the head at zero, so training starts from the analytically designed proposal.

We train on a pool of margins as in Figure~\ref{fig:method}(c). At each step we draw an instance, sample $B$ matrices with its margins from the current policy, and minimize the variance of their log weights, the loss $\mathcal{L}_{\mathrm{LV}}$ of \eqref{eq:lossw}. Every partial matrix reached in a rollout is itself an instance, so the gradient of each rollout reaches the policy at every reduced margin it visits.
At deployment we run SIS with MarginFlow as the proposal, one forward pass per row and no training on the new margins. One checkpoint serves every experiment.

\section{Experiments}
\label{sec:exp}

\subsection{Setup}
\label{sec:setup}

We train one network on a pool of margins and evaluate it zero-shot on held-out margins against the analytically designed proposals, with the details in Appendix~\ref{app:setup}. The pool holds 1904 margins, 1688 synthetic and 216 real. The synthetic margins come from six families that vary the shape, the density, and the unevenness of the margins, the three things that set an analytically designed proposal's distance from $q^*$ \citep{harrison2013importance}. They span 4 to 61 columns, row-to-column ratios from 1 to 140, and density 0.02 to 0.70. The real margins are species-by-site matrices \citep{atmar1995nestedness}, the interaction networks of Web of Life \citep{fortuna2014web}, item-response tables \citep{rizopoulos2006ltm}, and affiliation networks \citep{peixoto2020netzschleuder,davis1941deep}.

The test set holds 1190 margins held out from the pool by seed and, for the real collections, by source. They run from $3\times3$ to $870\times6$. Of these, 880 are synthetic draws with their own seed, 100 interpolate the family parameters between training values, and 210 are real tables (88 species-by-site matrices, 117 Web of Life networks, and 5 psychometric and social tables). The network enumerates the feasible row types at every state, and we train it where that enumeration stays under $2\times10^4$ types per state and evaluate it under $10^5$. Appendix~\ref{app:cost} reports the time per draw.

We report the effective sample fraction $\mathrm{ESS}/N$ of \eqref{eq:ess} as a percentage, so that 100\% is the zero-variance proposal. The hardest tier of Table~\ref{tab:main}, under 37\%, is where the post-hoc best loses more than one nat, since $e^{-1}=36.8\%$ in \eqref{eq:essd2}. Where the type-level state graph of Proposition~\ref{prop:reduced} has at most $4\times10^5$ states, dynamic programming evaluates the limit $e^{-D_2(p\,\|\,q)}$ of \eqref{eq:essd2} exactly, for MarginFlow and every baseline, since all of them treat columns of equal reduced sum alike. This is the case on 681 of the test margins. On the other 509 we estimate it from $N=4000$ draws and report the median over 16 independent repetitions. The reported value of MarginFlow on each test margin is the median over three training seeds, each trained for 25 hours on eight A100s. 

Our baselines are 31 configurations of analytically designed proposals, five proposals at six exponents $u\in[0.5,2]$ and the uniform distribution, all over the Gale--Ryser feasible rows. The five are the conditional Poisson proposal of \citet{chen2005sequential} with two weightings, the two proposals of \citet{harrison2013importance}, and the maximum-entropy proposal of \citet{glasserman2023maximum}, each with weights raised to power $u$. We report three of them. CDHL is the conditional Poisson proposal at $u=1$, the default of the networksis package \citep{admiraal2008networksis} and the one in widest use. Harrison--Miller is their proposal built on \citet{canfield2008asymptotic} at $u=1$, among the strongest configurations in the sweep and also the network before training. The post-hoc best is the best of all 31 on each margin, selected on 8 repetitions separate from the 16 reported, so every baseline stands on 24 repetitions per margin and the sweep takes about 9,000 hours of compute.
\subsection{Main results}
\label{sec:main}

\begin{table}[!htbp]
\caption{Median effective sample fraction (\%) on the 1190 held-out margins, by source and by the post-hoc best of the 31 configurations. The Harrison--Miller column is the network before training. W/T/L counts margins where MarginFlow is above, within 0.02 nats of, or below the post-hoc best.}
\label{tab:main}
\centering
\small
\setlength{\tabcolsep}{5pt}
\begin{tabular}{lcccccc}
\toprule
Margins & $n$ & CDHL & \shortstack{Harrison--\\Miller} & \shortstack{Post-hoc\\best} & \shortstack{MarginFlow\\(ours)} & W/T/L \\
\midrule
All & 1190 & 41.5 & 97.4 & 99.4 & 99.8 & 415/772/3 \\
\quad synthetic, new seeds & 880 & 34.8 & 97.2 & 99.4 & 99.8 & 316/563/1 \\
\quad synthetic, interpolated & 100 & 60.2 & 98.5 & 99.7 & 99.9 & 28/72/0 \\
\quad real tables & 210 & 53.0 & 97.5 & 99.2 & 99.8 & 71/137/2 \\
\midrule
Post-hoc best above 98\% & 755 & 56.6 & 98.9 & 99.9 & 99.9 & 0/754/1 \\
\quad between 90 and 98\% & 182 & 42.6 & 93.8 & 95.9 & 99.7 & 164/18/0 \\
\quad between 37 and 90\% & 197 & 16.4 & 71.4 & 76.6 & 99.0 & 195/0/2 \\
\quad below 37\% & 56 & 0.5 & 7.7 & 10.3 & 94.1 & 56/0/0 \\
\bottomrule
\end{tabular}
\end{table}

Table~\ref{tab:main} reports the median effective sample fraction against three baselines that stand for three users of classical SIS, one who runs the default, one who follows the literature to the strongest fixed proposal, and one who knew in advance which of the 31 configurations each margin needs, a choice no user can make. The default wastes three draws in five on a typical margin, and the literature recovers most of that loss. MarginFlow meets each test margin only at sampling time. It beats its untrained start on more than half of the margins and loses on none, and it matches or beats the hindsight choice on all but three, winning on a third of the real tables and losing on two. The medians in the upper block of the table sit close together because most held-out margins are easy for every proposal alike, and the gap lies in the tail, where MarginFlow raises the tenth percentile of the margins from the post-hoc best's 68.5\% to 98.1\%.

\Needspace{20\baselineskip}
\begin{wrapfigure}{r}{0.44\linewidth}
\centering
\includegraphics[width=0.97\linewidth]{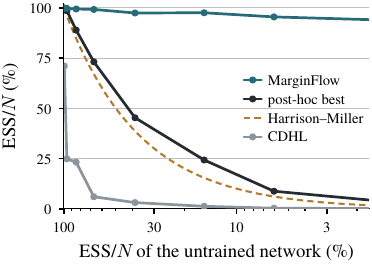}
\caption{Median effective sample fraction on the 1190 held-out margins, binned by the fraction of the untrained network, the dashed line.}
\label{fig:difficulty}
\end{wrapfigure}
The lower block of Table~\ref{tab:main} splits the margins by how well the post-hoc best does. Where that best already keeps nearly all of its draws, there is little left to gain, and the two tie on all but one margin. Below that MarginFlow pulls away, and once the post-hoc best falls under 90\% it wins all but two of the margins. Figure~\ref{fig:difficulty} sorts the margins instead by how far the untrained start is from $q^*$. As it collapses, the choice among the 31 configurations recovers only a few points, while training lifts the same margins back to a median above 94\% in every bin. The hardest tier consists mostly of tall dense margins, with up to 840 rows, from the power-law, extreme-sum, and bimodal families and from the held-out ecological and Web of Life collections alike. There the default and the untrained start keep almost nothing, choosing proposal and exponent with hindsight barely helps, and MarginFlow still keeps 94\% of its draws. Where analytical design and the choice among its products both give out, the learned proposal holds, on margins it has never seen.

\subsection{Error compounds over the rows}
\label{sec:rows}

\begin{wrapfigure}{r}{0.44\linewidth}
\centering
\includegraphics[width=0.97\linewidth]{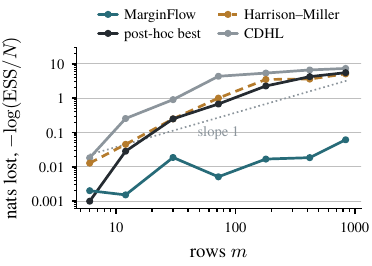}
\caption{Nats lost against the number of rows on power-law margins with six columns in the dense band. The dotted line has slope one.}
\label{fig:rows_main}
\end{wrapfigure}

Analytically designed proposals lose most on tall tables, and MarginFlow holds there because it errs less on every row. The weight of a matrix is a product over its rows, so a proposal that errs by $\delta$ nats on every row loses up to $m\delta^2$ nats in total, as Appendix~\ref{app:rows} proves. Figure~\ref{fig:rows_main} fixes family, width and density band and follows the nats lost from 6 to 840 rows. Every analytically designed proposal climbs with a slope near one on the log-log axes, and the Harrison--Miller proposal falls from 98.7\% at 6 rows to 0.6\% at 840. MarginFlow climbs as well, from a far smaller error per row, and stays below 0.1 nat at 840 rows and below one nat in every group of the dense band (Appendix~\ref{app:rows}). On the family of \citet{bezakova2012negative}, it extrapolates to five times the rows it trained on (Appendix~\ref{app:bssv}). A fixed formula errs by a fixed amount per row, and choosing among 31 changes little, as the post-hoc best shows. Training makes every row more accurate, and a tall table has less to compound.

\subsection{Training without the count}
\label{sec:count}

Figure~\ref{fig:curves} confirms the claims of Section~\ref{sec:flow} on one held-out margin, an $88\times6$ Web of Life table, the exactly evaluated margin on which the untrained start is farthest from $q^*$. Three seeds train on it alone, and at every checkpoint we draw $N=4000$ matrices and compute the exact $D_2(p\,\|\,q_\theta)$.

\begin{figure}[!htbp]
\centering
\includegraphics[width=0.76\linewidth]{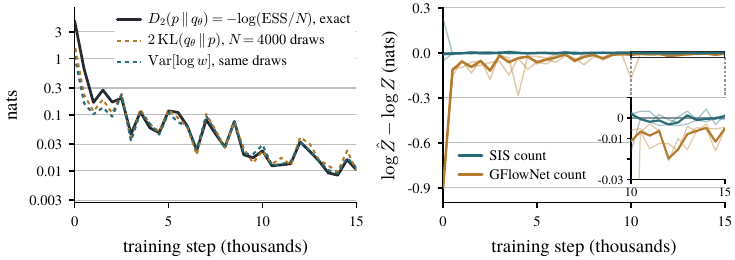}
\caption{Training on a held-out $88\times6$ Web of Life table, median over three seeds. Left, the exact divergence and the two quantities training sees on its own draws. Right, the same draws read as counts, by SIS through the mean weight $\log\bar{w}$ and by the GFlowNet through the mean log weight $\overline{\log w}$ that trajectory balance fits, with the seeds in light colour and the last third of training enlarged.}
\label{fig:curves}
\end{figure}

The left panel shows the loss, the variance of $\log w$ over the batch, falling from 1.1 nats to 0.01 in fifteen thousand steps. Twice the divergence $\KL(q_\theta\,\|\,p)$ falls with it. The exact $D_2(p\,\|\,q_\theta)$, which sets the quality of SIS, falls with both, from 4.7 nats to 0.01. At the start the three differ by a factor of four, and from about a thousand steps on they coincide, as Proposition~\ref{prop:ess} says they must once $q_\theta$ is close to $p$. Training on the GFlowNet loss thus trains the SIS proposal without ever using $\log Z$.
\FloatBarrier

The right panel reads the same draws as counts. The SIS estimate, the log of the mean weight, stays within 0.02 nats of $\log Z$ from the first thousand steps on, while the policy is far from uniform. The mean log weight, which trajectory balance would fit as $\log Z_\theta$, starts 0.9 nats below $\log Z$ and closes the gap only as the divergence closes. The inset shows it still short at the end of training, by the 0.005 nats of divergence that remain. The count therefore comes from SIS, as Remark~\ref{rem:count} says.

\section{Related work}
\label{sec:related}

SIS for fixed margins starts with \citet{snijders1991enumeration}, and every proposal since, from the conditional Poisson of \citet{chen2005sequential} to the maximum entropy of \citet{glasserman2023maximum}, is a closed form fixed in advance \citep{blanchet2009efficient,blitzstein2011sequential,harrison2013importance}. \citet{bezakova2012negative} exhibit margins on which the conditional Poisson proposal needs exponentially many draws. MarginFlow learns one proposal and serves unseen margins zero-shot.

Learned proposals predate GFlowNets \citep{gu2015neural,muller2019neural,wu2019solving,nicoli2020asymptotically}, each for one model, and \citet{zhao2024twisted} and \citet{choi2026reinforced} learn the twist and the proposal kernel of sequential Monte Carlo. Inference compilation \citep{paige2016inference,le2017inference} and conditional GFlowNets \citep{zhang2023robust,kim2025gfacs} amortize across instances, and we put this problem in that form through its self-similarity. Appendix~\ref{app:related} expands on each.

\section{Conclusion}
\label{sec:conclusion}

The proposal that sequential importance sampling has searched for since \citet{snijders1991enumeration} is the policy of a GFlowNet with unit reward on every matrix that has the given margins, and the count is its total flow. The policy is learned from its own draws, without the count, and we exploit the problem's self-similarity to learn it with one network for all margins. Trained once on 1904 margins, MarginFlow runs zero-shot on 1190 held-out margins and matches or beats the best of 31 analytically designed configurations on all but three, with the largest gains where analytical design gives out. Contingency tables with integer entries and graphs with prescribed degrees are also built row by row from a remainder that is again an instance, and the same construction carries over to each of them.

\bibliography{references}

@misc{fu2026spectral,
  author        = {Fu, Weibo and Qin, Qian and Wang, Guanyang},
  title         = {Spectral Gap for the Binary Fixed-Margin Swap Chain},
  year          = {2026},
  howpublished  = {arXiv:2606.22636},
  eprint        = {2606.22636},
  archivePrefix = {arXiv},
  primaryClass  = {math.PR},
  url           = {https://arxiv.org/abs/2606.22636}
}

@misc{nie2026snake,
  author        = {Nie, Zipei and Wang, Guanyang and Zhang, Peng},
  title         = {The {Snake} Algorithm: A Rejection-Free Sampler for Binary Matrices with Fixed Margins},
  year          = {2026},
  howpublished  = {arXiv:2608.17531},
  eprint        = {2608.17531},
  archivePrefix = {arXiv},
  primaryClass  = {stat.CO},
  url           = {https://arxiv.org/abs/2608.17531}
}

@article{chen2005sequential,
  title   = {Sequential {M}onte {C}arlo Methods for Statistical Analysis of Tables},
  author  = {Chen, Yuguo and Diaconis, Persi and Holmes, Susan P. and Liu, Jun S.},
  journal = {Journal of the American Statistical Association},
  volume  = {100},
  number  = {469},
  pages   = {109--120},
  year    = {2005},
  doi     = {10.1198/016214504000001303}
}

@article{miller2013exact,
  title   = {Exact Sampling and Counting for Fixed-Margin Matrices},
  author  = {Miller, Jeffrey W. and Harrison, Matthew T.},
  journal = {The Annals of Statistics},
  volume  = {41},
  number  = {3},
  pages   = {1569--1592},
  year    = {2013},
  doi     = {10.1214/13-AOS1131}
}

@article{snijders1991enumeration,
  title   = {Enumeration and Simulation Methods for 0--1 Matrices with Given Marginals},
  author  = {Snijders, Tom A. B.},
  journal = {Psychometrika},
  volume  = {56},
  number  = {3},
  pages   = {397--417},
  year    = {1991},
  doi     = {10.1007/BF02294482}
}

@article{connor1979assembly,
  title   = {The Assembly of Species Communities: Chance or Competition?},
  author  = {Connor, Edward F. and Simberloff, Daniel},
  journal = {Ecology},
  volume  = {60},
  number  = {6},
  pages   = {1132--1140},
  year    = {1979},
  doi     = {10.2307/1936961}
}

@article{kannan1999simple,
  title   = {Simple {M}arkov-Chain Algorithms for Generating Bipartite Graphs and Tournaments},
  author  = {Kannan, Ravi and Tetali, Prasad and Vempala, Santosh},
  journal = {Random Structures \& Algorithms},
  volume  = {14},
  number  = {4},
  pages   = {293--308},
  year    = {1999},
  doi     = {10.1002/(SICI)1098-2418(199907)14:4<293::AID-RSA1>3.0.CO;2-G}
}

@book{rasch1960probabilistic,
  title     = {Probabilistic Models for Some Intelligence and Attainment Tests},
  author    = {Rasch, Georg},
  publisher = {Danish Institute for Educational Research},
  address   = {Copenhagen},
  year      = {1960}
}

@article{chen2005exact,
  title   = {Exact Tests for the {R}asch Model via Sequential Importance Sampling},
  author  = {Chen, Yuguo and Small, Dylan},
  journal = {Psychometrika},
  volume  = {70},
  number  = {1},
  pages   = {11--30},
  year    = {2005},
  doi     = {10.1007/s11336-003-1069-1}
}

@article{blanchet2009efficient,
  title   = {Efficient Importance Sampling for Binary Contingency Tables},
  author = {Blanchet, Jos{\'e} H.},
  journal = {The Annals of Applied Probability},
  volume  = {19},
  number  = {3},
  year    = {2009},
  doi     = {10.1214/08-AAP558},
  pages = {949--982}
}

@article{harrison2013importance,
  title   = {Importance Sampling for Weighted Binary Random Matrices with Specified Margins},
  author  = {Harrison, Matthew T. and Miller, Jeffrey W.},
  journal = {arXiv preprint arXiv:1301.3928},
  year    = {2013}
}

@article{glasserman2023maximum,
  title   = {Maximum Entropy Distributions with Applications to Graph Simulation},
  author  = {Glasserman, Paul and Lelo de Larrea, Enrique},
  journal = {Operations Research},
  volume  = {71},
  number  = {5},
  pages   = {1908--1924},
  year    = {2023},
  doi     = {10.1287/opre.2022.2323}
}

@article{blitzstein2011sequential,
  title   = {A Sequential Importance Sampling Algorithm for Generating Random Graphs with Prescribed Degrees},
  author  = {Blitzstein, Joseph and Diaconis, Persi},
  journal = {Internet Mathematics},
  volume  = {6},
  number  = {4},
  pages   = {489--522},
  year    = {2011},
  doi     = {10.1080/15427951.2010.557277}
}

@article{bezakova2012negative,
  title   = {Negative Examples for Sequential Importance Sampling of Binary Contingency Tables},
  author  = {Bez{\'a}kov{\'a}, Ivona and Sinclair, Alistair and {\v{S}}tefankovi{\v{c}}, Daniel and Vigoda, Eric},
  journal = {Algorithmica},
  volume  = {64},
  number  = {4},
  pages   = {606--620},
  year    = {2012},
  doi     = {10.1007/s00453-011-9569-3}
}

@article{gotelli2012statistical,
  title   = {Statistical Challenges in Null Model Analysis},
  author  = {Gotelli, Nicholas J. and Ulrich, Werner},
  journal = {Oikos},
  volume  = {121},
  number  = {2},
  pages   = {171--180},
  year    = {2012},
  doi     = {10.1111/j.1600-0706.2011.20301.x}
}

@article{strona2014fast,
  title   = {A Fast and Unbiased Procedure to Randomize Ecological Binary Matrices with Fixed Row and Column Totals},
  author  = {Strona, Giovanni and Nappo, Domenico and Boccacci, Francesco and Fattorini, Simone and San-Miguel-Ayanz, Jesus},
  journal = {Nature Communications},
  volume  = {5},
  pages   = {4114},
  year    = {2014},
  doi     = {10.1038/ncomms5114}
}

@article{verhelst2008efficient,
  title   = {An Efficient {MCMC} Algorithm to Sample Binary Matrices with Fixed Marginals},
  author  = {Verhelst, Norman D.},
  journal = {Psychometrika},
  volume  = {73},
  number  = {4},
  pages   = {705--728},
  year    = {2008},
  doi     = {10.1007/s11336-008-9062-3}
}

@article{fosdick2018configuring,
  title   = {Configuring Random Graph Models with Fixed Degree Sequences},
  author  = {Fosdick, Bailey K. and Larremore, Daniel B. and Nishimura, Joel and Ugander, Johan},
  journal = {SIAM Review},
  volume  = {60},
  number  = {2},
  pages   = {315--355},
  year    = {2018},
  doi     = {10.1137/16M1087175}
}

@article{gobbi2014fast,
  title   = {Fast Randomization of Large Genomic Datasets While Preserving Alteration Counts},
  author  = {Gobbi, Andrea and Iorio, Francesco and Dawson, Kevin J. and Wedge, David C. and Tamborero, David and Alexandrov, Ludmil B. and Lopez-Bigas, Nuria and Garnett, Mathew J. and Jurman, Giuseppe and Saez-Rodriguez, Julio},
  journal = {Bioinformatics},
  volume  = {30},
  number  = {17},
  pages   = {i617--i623},
  year    = {2014},
  doi     = {10.1093/bioinformatics/btu474}
}

@article{erdos2022mixing,
  title   = {The Mixing Time of Switch {M}arkov Chains: A Unified Approach},
  author  = {Erd{\H{o}}s, P{\'e}ter L. and Greenhill, Catherine and Mezei, Tam{\'a}s R{\'o}bert and Mikl{\'o}s, Istv{\'a}n and Solt{\'e}sz, D{\'a}niel and Soukup, Lajos},
  journal = {European Journal of Combinatorics},
  volume  = {99},
  pages   = {103421},
  year    = {2022},
  doi     = {10.1016/j.ejc.2021.103421}
}

@inproceedings{bengio2021flow,
  title     = {Flow Network based Generative Models for Non-Iterative Diverse Candidate Generation},
  author    = {Bengio, Emmanuel and Jain, Moksh and Korablyov, Maksym and Precup, Doina and Bengio, Yoshua},
  booktitle = {Advances in Neural Information Processing Systems},
  volume    = {34},
  year      = {2021},
  note      = {arXiv:2106.04399},
  pages = {27381--27394}
}

@article{bengio2023gflownet,
  title   = {{GFlowNet} Foundations},
  author  = {Bengio, Yoshua and Lahlou, Salem and Deleu, Tristan and Hu, Edward J. and Tiwari, Mo and Bengio, Emmanuel},
  journal = {Journal of Machine Learning Research},
  volume  = {24},
  number  = {210},
  pages   = {1--55},
  year    = {2023},
  note    = {arXiv:2111.09266}
}

@inproceedings{whitammer2022trajectory,
  title     = {Trajectory Balance: Improved Credit Assignment in {GFlowNets}},
  author    = {Whitammer, Esmeralda S. and Jain, Moksh and Bengio, Emmanuel and Sun, Chen and Bengio, Yoshua},
  booktitle = {Advances in Neural Information Processing Systems},
  volume    = {35},
  year      = {2022},
  note      = {arXiv:2201.13259}
}

@inproceedings{richter2020vargrad,
  title     = {{VarGrad}: A Low-Variance Gradient Estimator for Variational Inference},
  author = {Richter, Lorenz and Boustati, Ayman and N{\"u}sken, Nikolas and Ruiz, Francisco J. R. and Akyildiz, {\"O}mer Deniz},
  booktitle = {Advances in Neural Information Processing Systems},
  volume    = {33},
  year      = {2020},
  note      = {arXiv:2010.10436},
  pages = {13481--13492}
}

@inproceedings{zhang2023robust,
  title     = {Robust Scheduling with {GFlowNets}},
  author    = {Zhang, David W. and Rainone, Corrado and Peschl, Markus and Bondesan, Roberto},
  booktitle = {International Conference on Learning Representations},
  year      = {2023},
  note      = {arXiv:2302.05446}
}

@inproceedings{whitammer2023gflownets,
  title     = {{GFlowNets} and Variational Inference},
  author    = {Whitammer, Esmeralda S. and Lahlou, Salem and Deleu, Tristan and Ji, Xu and Hu, Edward and Everett, Katie and Zhang, Dinghuai and Bengio, Yoshua},
  booktitle = {International Conference on Learning Representations},
  year      = {2023},
  note      = {arXiv:2210.00580}
}

@article{kong1994sequential,
  title   = {Sequential Imputations and {Bayesian} Missing Data Problems},
  author  = {Kong, Augustine and Liu, Jun S. and Wong, Wing Hung},
  journal = {Journal of the American Statistical Association},
  volume  = {89},
  number  = {425},
  pages   = {278--288},
  year    = {1994},
  doi = {10.1080/01621459.1994.10476469}
}

@article{agapiou2017importance,
  title   = {Importance Sampling: Intrinsic Dimension and Computational Cost},
  author  = {Agapiou, Sergios and Papaspiliopoulos, Omiros and Sanz-Alonso, Daniel and Stuart, Andrew M.},
  journal = {Statistical Science},
  volume  = {32},
  number  = {3},
  year    = {2017},
  note    = {arXiv:1511.06196},
  pages = {405--431},
  doi = {10.1214/17-STS611}
}

@inproceedings{lee2019set,
  title={Set transformer: A framework for attention-based permutation-invariant neural networks},
  author={Lee, Juho and Lee, Yoonho and Kim, Jungtaek and Kosiorek, Adam R. and Choi, Seungjin and Teh, Yee Whye},
  booktitle={Proceedings of the 36th International Conference on Machine Learning},
  year={2019},
  volume = {97},
  series = {Proceedings of Machine Learning Research},
  pages = {3744--3753},
  publisher = {PMLR}
}

@article{canfield2008asymptotic,
  title={Asymptotic enumeration of dense 0--1 matrices with specified line sums},
  author={Canfield, E. Rodney and Greenhill, Catherine and McKay, Brendan D.},
  journal={Journal of Combinatorial Theory, Series A},
  volume={115},
  number={1},
  pages={32--66},
  year={2008}
}

@misc{atmar1995nestedness,
  title={The nestedness temperature calculator: a Visual Basic program, including 294 presence-absence matrices},
  author={Atmar, Wirt and Patterson, Bruce D.},
  howpublished={AICS Research, Inc., University Park, NM, and The Field Museum, Chicago},
  year={1995}
}

@article{fortuna2014web,
  title={The Web of Life},
  author={Fortuna, Miguel A. and Ortega, Ra{\'u}l and Bascompte, Jordi},
  journal={arXiv preprint arXiv:1403.2575},
  year={2014}
}

@article{greenhill2006asymptotic,
  title={Asymptotic enumeration of sparse 0--1 matrices with irregular row and column sums},
  author={Greenhill, Catherine and McKay, Brendan D. and Wang, Xiaoji},
  journal={Journal of Combinatorial Theory, Series A},
  volume={113},
  number={2},
  pages={291--324},
  year={2006},
  doi = {10.1016/j.jcta.2005.03.005}
}

@article{rizopoulos2006ltm,
  title = {ltm: An {R} package for latent variable modeling and item response theory analyses},
  author={Rizopoulos, Dimitris},
  journal={Journal of Statistical Software},
  volume={17},
  number={5},
  pages={1--25},
  year={2006},
  doi = {10.18637/jss.v017.i05}
}

@misc{peixoto2020netzschleuder,
  title={The {N}etzschleuder network catalogue and repository},
  author={Peixoto, Tiago P.},
  howpublished={\url{https://networks.skewed.de/}},
  year={2020}
}

@book{davis1941deep,
  title={Deep South: A Social Anthropological Study of Caste and Class},
  author={Davis, Allison and Gardner, Burleigh B. and Gardner, Mary R.},
  publisher={University of Chicago Press},
  address={Chicago},
  year={1941}
}

@article{admiraal2008networksis,
  title={networksis: A package to simulate bipartite graphs with fixed marginals through sequential importance sampling},
  author={Admiraal, Ryan and Handcock, Mark S.},
  journal={Journal of Statistical Software},
  volume={24},
  number={8},
  pages={1--21},
  year={2008},
  doi = {10.18637/jss.v024.i08}
}

@article{jerrum1986random,
  title   = {Random Generation of Combinatorial Structures from a Uniform Distribution},
  author  = {Jerrum, Mark R. and Valiant, Leslie G. and Vazirani, Vijay V.},
  journal = {Theoretical Computer Science},
  volume  = {43},
  pages   = {169--188},
  year    = {1986},
  doi     = {10.1016/0304-3975(86)90174-X}
}

@inproceedings{gu2015neural,
  author    = {Shixiang Gu and Zoubin Ghahramani and Richard E. Turner},
  title     = {Neural Adaptive Sequential {Monte Carlo}},
  booktitle = {Advances in Neural Information Processing Systems},
  volume    = {28},
  year      = {2015},
  pages = {2629--2637}
}

@inproceedings{paige2016inference,
  author    = {Brooks Paige and Frank Wood},
  title     = {Inference Networks for Sequential {Monte Carlo} in Graphical Models},
  booktitle = {Proceedings of the 33rd International Conference on Machine Learning},
  year      = {2016},
  volume = {48},
  series = {Proceedings of Machine Learning Research},
  pages = {3040--3049},
  publisher = {PMLR}
}

@inproceedings{le2017inference,
  author    = {Tuan Anh Le and At{\i}l{\i}m G{\"u}ne{\c{s}} Baydin and Frank Wood},
  title     = {Inference Compilation and Universal Probabilistic Programming},
  booktitle = {Proceedings of the 20th International Conference on Artificial Intelligence and Statistics},
  year      = {2017},
  volume = {54},
  series = {Proceedings of Machine Learning Research},
  pages = {1338--1348},
  publisher = {PMLR}
}

@article{muller2019neural,
  author  = {Thomas M{\"u}ller and Brian McWilliams and Fabrice Rousselle and Markus Gross and Jan Nov{\'a}k},
  title   = {Neural Importance Sampling},
  journal = {ACM Transactions on Graphics},
  volume  = {38},
  number  = {5},
  pages   = {145:1--145:19},
  year    = {2019},
  doi = {10.1145/3341156}
}

@article{wu2019solving,
  author  = {Dian Wu and Lei Wang and Pan Zhang},
  title   = {Solving Statistical Mechanics Using Variational Autoregressive Networks},
  journal = {Physical Review Letters},
  volume  = {122},
  pages   = {080602},
  year    = {2019},
  number = {8},
  doi = {10.1103/PhysRevLett.122.080602}
}

@article{nicoli2020asymptotically,
  author  = {Kim A. Nicoli and Shinichi Nakajima and Nils Strodthoff and Wojciech Samek and Klaus-Robert M{\"u}ller and Pan Kessel},
  title   = {Asymptotically Unbiased Estimation of Physical Observables with Neural Samplers},
  journal = {Physical Review E},
  volume  = {101},
  pages   = {023304},
  year    = {2020},
  doi = {10.1103/PhysRevE.101.023304}
}

@inproceedings{choi2026reinforced,
  title     = {Reinforced Sequential {Monte Carlo} for Amortised Sampling},
  author    = {Choi, Sanghyeok and Mittal, Sarthak and Elvira, V{\'\i}ctor and Park, Jinkyoo and Whitammer, Esmeralda S.},
  booktitle = {International Conference on Machine Learning},
  year      = {2026},
  note      = {arXiv:2510.11711}
}

@inproceedings{boussif2025action,
  title     = {Action Abstractions for Amortized Sampling},
  author    = {Boussif, Oussama and Ezzine, L{\'e}na N{\'e}hale and Viviano, Joseph D. and Koziarski, Micha{\l} and Jain, Moksh and Whitammer, Esmeralda S. and Bengio, Emmanuel and Assouel, Rim and Bengio, Yoshua},
  booktitle = {International Conference on Learning Representations},
  year      = {2025},
  note      = {arXiv:2410.15184}
}

@article{neal2024pattern,
  title   = {Pattern Detection in Bipartite Networks: A Review of Terminology, Applications, and Methods},
  author  = {Neal, Zachary P. and Cadieux, Annabell and Garlaschelli, Diego and Gotelli, Nicholas J. and Saracco, Fabio and Squartini, Tiziano and Shutters, Shade T. and Ulrich, Werner and Wang, Guanyang and Strona, Giovanni},
  journal = {PLOS Complex Systems},
  volume  = {1},
  number  = {2},
  pages   = {e0000010},
  year    = {2024},
  doi     = {10.1371/journal.pcsy.0000010}
}

@article{draxler2025testing,
  title   = {Testing Measurement Invariance in a Conditional Likelihood Framework by Considering Multiple Covariates Simultaneously},
  author  = {Draxler, Clemens and Kurz, Andreas},
  journal = {Behavior Research Methods},
  volume  = {57},
  pages   = {50},
  year    = {2025},
  doi     = {10.3758/s13428-024-02551-9}
}

@article{neal2025stopping,
  title   = {A Stopping Rule for Randomly Sampling Bipartite Networks with Fixed Degree Sequences},
  author  = {Neal, Zachary P.},
  journal = {Social Networks},
  volume  = {80},
  pages   = {59--64},
  year    = {2025},
  doi     = {10.1016/j.socnet.2024.09.001}
}

@inproceedings{sun2025efficient,
  title     = {An Efficient Bipartite Graph Sampling Algorithm with Prescribed Degree Sequences},
  author    = {Sun, Tong and Hao, Jianshu and Zhang, Zhiyang and Jiang, Guangxin},
  booktitle = {Winter Simulation Conference},
  pages     = {259--270},
  year      = {2025},
  doi       = {10.1109/WSC68292.2025.11338997}
}

@inproceedings{dasilva2025right,
  title     = {When Do {GFlowNets} Learn the Right Distribution?},
  author    = {da Silva, Tiago and Alves, Rodrigo Barreto and de Souza da Silva, Eliezer and Souza, Amauri and Garg, Vikas and Kaski, Samuel and Mesquita, Diego},
  booktitle = {International Conference on Learning Representations},
  year      = {2025}
}

@inproceedings{fawkes2026ftb,
  title     = {$f$-Trajectory Balance: A Loss Family for Tuning {GFlowNets}, Generative Models, and {LLMs} with Off- and On-Policy Data},
  author    = {Fawkes, Jake and Hartford, Jason},
  booktitle = {International Conference on Machine Learning},
  year      = {2026},
  note      = {arXiv:2605.15417}
}

@inproceedings{tiapkin2024entropy,
  title     = {Generative Flow Networks as Entropy-Regularized {RL}},
  author    = {Tiapkin, Daniil and Morozov, Nikita and Naumov, Alexey and Vetrov, Dmitry},
  booktitle = {International Conference on Artificial Intelligence and Statistics},
  year      = {2024},
  note      = {arXiv:2310.12934}
}

@inproceedings{zhao2024twisted,
  title     = {Probabilistic Inference in Language Models via Twisted Sequential {Monte Carlo}},
  author    = {Zhao, Stephen and Brekelmans, Rob and Makhzani, Alireza and Grosse, Roger},
  booktitle = {International Conference on Machine Learning},
  year      = {2024},
  note      = {arXiv:2404.17546}
}

@inproceedings{kim2025gfacs,
  title     = {Ant Colony Sampling with {GFlowNets} for Combinatorial Optimization},
  author    = {Kim, Minsu and Choi, Sanghyeok and Kim, Hyeonah and Son, Jiwoo and Park, Jinkyoo and Bengio, Yoshua},
  booktitle = {International Conference on Artificial Intelligence and Statistics},
  year      = {2025},
  note      = {arXiv:2403.07041}
}

@article{jerdee2024improved,
  title   = {Improved Estimates for the Number of Non-Negative Integer Matrices with Given Row and Column Sums},
  author  = {Jerdee, Maximilian and Kirkley, Alec and Newman, M. E. J.},
  journal = {Proceedings of the Royal Society A},
  volume  = {480},
  number  = {2282},
  pages   = {20230470},
  year    = {2024},
  doi     = {10.1098/rspa.2023.0470}
}

@inproceedings{deleu2024control,
  title     = {Discrete Probabilistic Inference as Control in Multi-path Environments},
  author    = {Deleu, Tristan and Nouri, Padideh and Malkin, Nikolay and Precup, Doina and Bengio, Yoshua},
  booktitle = {Conference on Uncertainty in Artificial Intelligence},
  year      = {2024},
  note      = {arXiv:2402.10309}
}

@inproceedings{chen2026powerflow,
  title     = {{PowerFlow}: Unlocking the Dual Nature of {LLMs} via Principled Distribution Matching},
  author    = {Chen, Ruishuo and Chen, Yu and Li, Zhuoran and Huang, Longbo},
  booktitle = {International Conference on Machine Learning},
  year      = {2026},
  note      = {arXiv:2603.18363}
}
\bibliographystyle{dilab_ref}

\makeappendixtoc
\appendix
\section{Proofs}
\label{app:proofs}

Throughout, $p$ is the uniform distribution on $\Omega(r,c)$, $q$ is a policy over feasible rows with $q(X)=\prod_i q(x_i\mid X_{<i})$, and $w(X)=1/q(X)$.

\begin{proof}[Proof of Lemma~\ref{lem:flow}]
On a tree the flow through a state is the total reward of the terminal states below it \citep{bengio2021flow}. Below $X_{<i}$ lie exactly the completions of $X_{<i}$, each with reward one, so $F(X_{<i})=Z(X_{<i})$. The edge flow into $X_{\le i}$ is $F(X_{\le i})$, so $P_F(x_i\mid X_{<i})=Z(X_{\le i})/Z(X_{<i})$.
\end{proof}

\begin{proof}[Proof of Theorem~\ref{thm:sis}]
On a tree $P_B\equiv1$ and $R\equiv1$, so the trajectory-balance condition $Z_\theta P_F(\tau;\theta)=R(X)$ reads $Z_\theta q_\theta(X)=1$, and $\log R(X)-\log P_F(\tau;\theta)=-\log q_\theta(X)=\log w_\theta(X)$, which gives (i) and \eqref{eq:lossw}. For (iii), if the population variance of $\log w_\theta$ under $q_\theta$ is zero, then $q_\theta$ is constant on its support, and with full support that constant is $1/Z$. Then $q_\theta(X)=q^*(X)$ for every $X$, and since the tree has one path to each $X$, $q_\theta=q^*$ at every state. Conversely every weight under $q^*$ is $Z$.
\end{proof}

\begin{proof}[Proof of Theorem~\ref{thm:grad}]
Write $L_b=\log w_\theta(X_b)$, $\bar L$ for the batch mean, and $S_b=\nabla_\theta\log q_\theta(X_b)$, so that $\nabla_\theta L_b=-S_b$. With the draws held fixed and the variance normalized by $B$, $\nabla_\theta\mathcal{L}_{\mathrm{LV}}=\frac{2}{B}\sum_b(L_b-\bar L)\nabla_\theta L_b=-\frac{2}{B}\sum_b(L_b-\bar L)S_b$, since $\sum_b(L_b-\bar L)=0$. The draws are independent and $\mathbb{E}_{q_\theta}[S]=0$, so $\mathbb{E}[(L_b-\bar L)S_b]=\frac{B-1}{B}\mathbb{E}_{q_\theta}[LS]$. Meanwhile $\nabla_\theta\KL(q_\theta\,\|\,p)=\mathbb{E}_{q_\theta}[(\log q_\theta-\log p)S]=-\mathbb{E}_{q_\theta}[LS]$, because $\log p=-\log Z$ is constant and $\mathbb{E}_{q_\theta}[S]=0$. Combining, $\mathbb{E}[\nabla_\theta\mathcal{L}_{\mathrm{LV}}]=2\frac{B-1}{B}\nabla_\theta\KL(q_\theta\,\|\,p)$. Finally $\KL(q_\theta\,\|\,p)=\mathbb{E}_{q_\theta}[\log q_\theta]+\log Z=\log Z-H(q_\theta)$, and $\mathbb{E}_{q_\theta}[L]=-\mathbb{E}_{q_\theta}[\log q_\theta]=H(q_\theta)$.
\end{proof}

\begin{proof}[Proof of Proposition~\ref{prop:ess}]
By the law of large numbers, $\mathrm{ESS}/N=(\frac1N\sum_\ell w_\ell)^2/(\frac1N\sum_\ell w_\ell^2)\to(\mathbb{E}_q[w])^2/\mathbb{E}_q[w^2]=Z^2/\mathbb{E}_q[w^2]$. Here $\mathbb{E}_q[w^2]/Z^2=\sum_X q(X)/(q(X)^2Z^2)=\sum_X p(X)^2/q(X)=e^{D_2(p\|q)}$. The count estimate is a mean of $N$ independent weights with mean $Z$, so its relative mean squared error is $(\mathbb{E}_q[w^2]/Z^2-1)/N$. For the expansion, write $q=p(1+h)$ and expand to second order in $h$. First, $e^{D_2}=\mathbb{E}_p[1/(1+h)]=1+\mathbb{E}_p[h^2]+O(\|h\|_\infty^3)$. Second, $\log w=\log Z-\log(1+h)$, and under $q$, $\mathbb{E}_q[\log(1+h)]=\mathbb{E}_p[(1+h)(h-h^2/2)]+O(\|h\|_\infty^3)=\frac12\mathbb{E}_p[h^2]+O(\|h\|_\infty^3)$ and $\mathbb{E}_q[\log^2(1+h)]=\mathbb{E}_p[h^2]+O(\|h\|_\infty^3)$, so $\operatorname{Var}_q[\log w]=\mathbb{E}_p[h^2]+O(\|h\|_\infty^3)$. Third, $\KL(q\|p)=\mathbb{E}_p[(1+h)\log(1+h)]=\frac12\mathbb{E}_p[h^2]+O(\|h\|_\infty^3)$.
\end{proof}

\begin{proof}[Proof of Proposition~\ref{prop:reduced}]
The completions of $X_{<i}$ are the matrices in $\Omega(r_{\ge i},d)$, so $Z(X_{<i})=|\Omega(r_{\ge i},d)|$ and $Z(X_{\le i})=|\Omega(r_{>i},d-x_i)|$, and $q^*(x_i\mid X_{<i})$ is the ratio of the two. Permuting the columns of every matrix in $\Omega(r,c)$ by $\pi$ is a bijection onto $\Omega(r,\pi c)$, and permuting the rows is a bijection onto $\Omega(\sigma r,c)$. Both counts are therefore unchanged when $(x_i,d)$ is replaced by $(\pi x_i,\pi d)$ or $r_{>i}$ by $\sigma r_{>i}$. Two rows that place the same number of ones in the columns of each reduced sum differ by a permutation $\pi$ that fixes $d$, so they have the same probability.
\end{proof}

\section{Experimental details}
\label{app:setup}

\subsection{The pool}
\label{app:pool}

Shapes in the pool run from 4 to 61 columns with row-to-column ratios from 1 to 140. Densities run from 0.03 to 0.70 in three bands, sparse, medium, and dense, and down to 0.02 in the family of \citet{bezakova2012negative}. The margins come from six synthetic families, the largest at 840 rows and the smallest at 4.

Four families start from a propensity for each row and each column. Entry $(i,j)$ is then one with probability proportional to the product of the two propensities, scaled so that the expected density is the target, and the margins are the row and column sums of the realized matrix. Power-law propensities with exponent $\alpha\in\{0.6,1.0,1.4\}$ give heavy-tailed margins, and constant propensities give near-regular ones. A full row and a full column added to a power-law draw put one extreme sum on either side, and two or three activity levels on each side give bimodal margins. The other two families are built directly, without propensities. Columns filled to near saturation make the Gale--Ryser test rule out most rows at every step. The family of \citet{bezakova2012negative}, at up to $84\times61$, is the one on which the conditional Poisson proposal is proven to fail exponentially.

The real margins come from four collections. The 294 species-by-site matrices shipped with the nestedness temperature calculator \citep{atmar1995nestedness} and the interaction networks of Web of Life \citep{fortuna2014web} cover the ecological use. Item-response tables from the ltm package \citep{rizopoulos2006ltm} cover Rasch conditional inference, and affiliation networks from Netzschleuder \citep{peixoto2020netzschleuder} and the Southern Women data of \citet{davis1941deep} cover the social-network use.

\subsection{Splits}
\label{app:splits}

For every family, each combination of shape and density band is generated four times with different random seeds. Two of the four draws form the pool, one the development set, and one the test set. A further 100 test margins use power-law exponents and activity levels that lie between the training values, so they probe interpolation in the family parameters.

The real collections are split by their source. Each species-by-site matrix is one unit, the Web of Life networks that share a study form one unit, and each item-response table or affiliation network is one unit. A unit goes whole to the pool, the development set, or the test set, in the proportions 55/15/30 for the species-by-site matrices and 40/15/45 for Web of Life. This puts 126 species-by-site matrices, 88 Web of Life networks, and 2 psychometric and social tables in the pool, and leaves the others for the development and test sets. The type cap of Section~\ref{sec:setup} keeps 1904 of the 2391 training margins in the pool, 963 in the development set, and 1190 in the test set.

\subsection{Training and architecture}
\label{app:training}

We train three seeds of MarginFlow for 15,000 steps at learning rate $10^{-4}$, each step drawing four margins from the pool with $B=64$ matrices each. We pick the checkpoint by the exact effective sample fraction on the 963 development margins. All three seeds select the final step, and across the test margins their fractions differ by a median of 0.08 percentage points. Each seed trains for about 25 hours on a machine with eight A100s.

The architecture is chosen on a smaller gate before training. The two candidates are a Deep-Sets scorer that sums embeddings of the columns and the set transformer of Section~\ref{sec:network}. We fitted both to the exact $q^*$ by the per-state divergence $\KL(q^*\,\|\,q_\theta)$ on 34 small training margins and scored them on 8 held-out ones. At 4.0M parameters the Deep-Sets scorer reached an effective sample fraction of 43\%, and the set transformer reached 81\% at 1.7M, 85\% at 3.3M and 86\% at 12.9M, so we kept the 3.3M configuration, four layers of width 256. We also built a policy that places the ones of a row group by group, with a suffix dynamic program normalizing each step. It is exact as well, but it calls the network once per group instead of once per row, and its training step takes nine times as long.

\subsection{The analytically designed term of the logit}
\label{app:prior}

At row $i$, with $R=m-i+1$ rows still to place, $n$ columns, reduced column sums $d$, and remaining row sums $r_{i+1},\dots,r_m$ after the current row, let $T=\sum_{k>i}r_k$. The term $a(t)$ of \eqref{eq:logit} for a type $t=(c_v)_v$ is
\begin{equation}
a(t)=\sum_{0<v<R} c_v\,\alpha_v,\qquad
\alpha_v=\log\frac{v}{R-v}+\kappa\Bigl(\frac12-v+\frac{T}{n}\Bigr),\qquad
\kappa=\eta(1-\nu),
\end{equation}
with
\begin{equation}
\eta=\frac{n(R-1)}{T\,\bigl(n(R-1)-T\bigr)},\qquad
\nu=\eta\sum_{k>i}\Bigl(r_k-\frac{T}{R-1}\Bigr)^2,
\end{equation}
and $\kappa=0$ when $T=0$, $R=1$ or $T=n(R-1)$. Here $\alpha_v$ is the log weight of a one in a column with reduced sum $v$, the row-by-row form of the column weights that \citet{harrison2013importance} derive from the asymptotic enumeration formula of \citet{canfield2008asymptotic}. A column with $d_j=0$ or $d_j=R$ receives the same entry in every remaining row, so its term is the same for every feasible type and is left out of the sum.

\subsection{Baselines and evaluation}
\label{app:baselines}

The 31 baseline configurations come from five proposals at six exponents, and the uniform distribution over feasible rows. The five are the conditional Poisson proposal of \citet{chen2005sequential} with the weights $d_j/(R-d_j)$ and with those analysed by \citet{blanchet2009efficient}, the two proposals of \citet{harrison2013importance} built on the enumeration asymptotics of \citet{canfield2008asymptotic} and of \citet{greenhill2006asymptotic}, and the maximum-entropy proposal of \citet{glasserman2023maximum}. Each has its weights raised to a power $u\in\{0.5,0.75,1,1.25,1.5,2\}$. The two proposals of \citet{harrison2013importance} at $u=1$ are the strongest in the sweep. Every margin has rows as the longer side, sorted by decreasing row sum, as \citet{chen2005sequential} recommend, and every proposal builds it row by row.

Most of the compute goes into evaluation. The 31 configurations and the three seeds of MarginFlow are run on all 1190 test margins with 24 repetitions of $N=4000$ draws each, 8 for the post-hoc selection and 16 for the report. The Harrison--Miller column of Tables~\ref{tab:main} and~\ref{tab:hard} is the untrained network on its own draws. Where the post-hoc best selects that configuration, the two columns run the same proposal on separate draws and differ only by Monte Carlo noise. The gap is below 0.001 nats on every margin counted as a tie and reaches a factor of two in effective sample fraction on the hardest. Dynamic programming gave the exact fraction on the 681 margins whose type-level state graph has at most $4\times10^5$ states. Building the type graph and counting takes under a second on three quarters of these margins and up to ten minutes on the largest. Exact counting stops at the cap, and sequential importance sampling takes over. The 509 margins above it are also where MarginFlow's advantage is widest, with the tenth percentile of the post-hoc best at 35\% against 96\% for MarginFlow (Table~\ref{tab:family}). The evaluation takes about 9,000 hours of compute in total.

On the 509 margins above the cap the effective sample fraction is itself estimated from the draws, and a proposal that never visits a heavy region reports too high a fraction. The count estimate offers a check that needs no exact reference. Every proposal gives an unbiased estimate of $Z$, so a proposal that misses mass reports a lower $\log\hat Z$ than one that finds it. On 94\% of the 509 margins MarginFlow and the post-hoc best agree within Monte Carlo error, with a median gap below 0.001 nats. On the four margins where the gap exceeds 0.09 nats, all dense with 420 to 840 rows, the lower estimate is the post-hoc best. The three seeds of MarginFlow agree within 0.008 nats on every margin. The same holds against the whole grid of 31 configurations. On every one of the 509 margins, each analytically designed configuration whose own effective sample fraction is at least 20\% reports a count within 0.02 nats of MarginFlow's, and such a configuration exists on 475 of them.

\section{Additional experiments}
\label{app:more}

\subsection{The analytically designed term matters on the hardest margins}
\label{app:ablation}

The analytically designed term $a(t)$ of \eqref{eq:logit} makes a difference on the hardest margins. Table~\ref{tab:prior} trains the same network, with the same recipe and the same three seeds, with the term removed, and reports it beside MarginFlow on the 1183 test margins on which its evaluation finished. The seven missing are tall tables of 240 and 600 rows in the medium and dense bands and in the two hardest tiers. There the evaluation runs the network on every state that 16 repetitions of 4000 draws visit, and the network without the term spreads its draws over too many states to finish in time, so the seven it misses are its hardest margins and the table reads in its favour.

Without the term the untrained proposal is the uniform draw over feasible rows and keeps almost nothing, so the head has to learn the whole proposal rather than a correction. On the typical margin it does, and the two networks tie. On the hardest tier it still beats the post-hoc best on every margin, but keeps 82\% of its draws against 95\% with the term. Across all margins its three seeds disagree five times more. The term costs nothing at deployment and starts training from a proposal that already keeps 97\%. Without that start, the head falls behind on the hardest margins.

\begin{table}[htbp]
\caption{Median effective sample fraction (\%) with and without the analytically designed term $a(t)$, by the tiers of Table~\ref{tab:main}, on the 1183 test margins where both are evaluated. W/T/L compares the trained network without the term to the one with it.}
\label{tab:prior}
\centering
\small
\setlength{\tabcolsep}{4pt}
\begin{tabular}{lrcccccc}
\toprule
& & \multicolumn{2}{c}{Without $a(t)$} & \multicolumn{2}{c}{With $a(t)$} & & \\
\cmidrule(lr){3-4}\cmidrule(lr){5-6}
Margins & $n$ & untrained & trained & untrained & trained & Post-hoc best & W/T/L \\
\midrule
All & 1183 & 0.0 & 99.3 & 97.4 & 99.8 & 99.4 & 9/895/279 \\
\quad post-hoc best above 98\% & 755 & 0.1 & 99.7 & 98.9 & 99.9 & 99.9 & 0/684/71 \\
\quad between 90 and 98\% & 182 & 0.0 & 98.7 & 93.8 & 99.7 & 95.9 & 0/135/47 \\
\quad between 37 and 90\% & 196 & 0.0 & 95.4 & 71.7 & 99.0 & 76.8 & 5/75/116 \\
\quad below 37\% & 50 & 0.0 & 81.6 & 8.3 & 95.1 & 11.4 & 4/1/45 \\
\bottomrule
\end{tabular}
\end{table}

\subsection{MarginFlow compounds less error over the rows}
\label{app:rows}

The weight of a matrix is a product over its rows, and its variance collects one term per row. The term for row $i$ is the chi-square error of the policy at that row, weighted by the squared likelihood ratio of the prefix, and the bound that Section~\ref{sec:rows} uses follows from it.

\begin{theorem}[Row errors add up]
\label{thm:rows}
Let $q$ have full support on $\Omega(r,c)$. For a matrix $X$, write $\Lambda_i(X)=Z(X_{\le i})/(Z\,q(X_{\le i}))$ for the likelihood ratio of its first $i$ rows, so that $\Lambda_0=1$ and $\Lambda_m=w(X)/Z$, and write $\chi^2_i(X_{<i})=\sum_{x_i}q^*(x_i\mid X_{<i})^2/q(x_i\mid X_{<i})-1$ for the chi-square divergence from $q^*$ to $q$ at the state $X_{<i}$. Then
\begin{equation}
\frac{\mathbb{E}_q[w^2]}{Z^2}-1=\sum_{i=1}^m\mathbb{E}_q\bigl[\Lambda_{i-1}^2\,\chi^2_i(X_{<i})\bigr].
\label{eq:rows}
\end{equation}
\end{theorem}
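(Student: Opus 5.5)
The plan is a telescoping martingale argument on the likelihood ratios $\Lambda_i$. Note first that $\Lambda_i$ depends only on the prefix $X_{\le i}$. Since $Z(X_{\le m})=1$ for a complete matrix, $\Lambda_m=1/(Zq(X))=w(X)/Z$. Since $Z(X_{<1})=Z$, we have $\Lambda_0=1$. The left side of \eqref{eq:rows} is therefore $\mathbb{E}_q[\Lambda_m^2]-\Lambda_0^2$, and I would write it as the telescoping sum
\[
\sum_{i=1}^m\mathbb{E}_q\bigl[\Lambda_i^2-\Lambda_{i-1}^2\bigr].
\]
The theorem then reduces to a per-row identity, proved by conditioning on $X_{<i}$:
\[
\mathbb{E}_q\bigl[\Lambda_i^2\mid X_{<i}\bigr]=\Lambda_{i-1}^2\bigl(1+\chi^2_i(X_{<i})\bigr).
\]

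To prove the per-row identity, I would use the ratio form $\Lambda_i=\Lambda_{i-1}\,\rho_i$ with $\rho_i=q^*(x_i\mid X_{<i})/q(x_i\mid X_{<i})$. This follows directly from the definitions, because $Z(X_{\le i})/Z(X_{<i})=q^*(x_i\mid X_{<i})$ by \eqref{eq:qstar}, and $q(X_{\le i})=q(X_{<i})\,q(x_i\mid X_{<i})$. Conditioning on $X_{<i}$ and summing over the feasible rows $x_i$ weighted by $q(x_i\mid X_{<i})$ then gives
\[
\mathbb{E}_q\bigl[\rho_i^2\mid X_{<i}\bigr]=\sum_{x_i}\frac{q^*(x_i\mid X_{<i})^2}{q(x_i\mid X_{<i})}=1+\chi^2_i(X_{<i}).
\]
Pulling out the $X_{<i}$-measurable factor $\Lambda_{i-1}^2$ yields the identity. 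Taking expectations, each telescoped term equals $\mathbb{E}_q[\Lambda_{i-1}^2\chi^2_i(X_{<i})]$, and summing over $i$ gives \eqref{eq:rows}.

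The only point needing care, and the main obstacle, is the support. The sum defining $\chi^2_i$ must range over the same rows that $q$ can draw, namely the feasible rows with $Z(X_{\le i})>0$. Full support of $q$ guarantees that $q(x_i\mid X_{<i})>0$ wherever $q^*$ is positive, so no division by zero occurs. It also means every state $X_{<i}$ reached under $q$ has $Z(X_{<i})>0$, so $q^*$ is well defined there, and $\sum_{x_i}q^*(x_i\mid X_{<i})=1$ holds by the flow conservation of Lemma~\ref{lem:flow}. Subtracting that sum makes the $-1$ in $\chi^2_i$ consistent. As a consistency check, $\Lambda_i$ is a $q$-martingale with mean one, which is what Proposition~\ref{prop:ess} uses at $i=m$, and the identity is its Doob decomposition of the second moment. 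The bound $m\delta^2$ quoted in Section~\ref{sec:rows} then follows heuristically when each $\chi^2_i\approx\delta^2$ and $\Lambda_{i-1}\approx1$.
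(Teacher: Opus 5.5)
Your proposal is correct and follows the paper's proof essentially step for step: you use the ratio $\Lambda_i/\Lambda_{i-1}=q^*(x_i\mid X_{<i})/q(x_i\mid X_{<i})$, its conditional second moment $1+\chi^2_i(X_{<i})$, and the telescoping sum from $\Lambda_0=1$ to $\Lambda_m=w/Z$. Your extra care about support, namely that every state reached under $q$ has $Z(X_{<i})>0$ and that full support makes $q$ positive on every feasible row, is left implicit in the paper, and you handle it correctly.
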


\begin{corollary}
\label{cor:rows}
Suppose that at every state the log-probabilities of $q$ differ from those of $q^*$ by errors that lie in an interval of width $2\delta$. Then $D_2(p\,\|\,q)\le m\delta^2$.
\end{corollary}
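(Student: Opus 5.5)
The plan is to bound the chi-square error of $q$ at each state by a constant that depends only on $\delta$, and then let it compound multiplicatively over the $m$ rows. Rather than sum the terms of \eqref{eq:rows} with the prefix weights $\Lambda_{i-1}^2$ left uncontrolled, I would use the one-step identity behind Theorem~\ref{thm:rows} and track the second moment itself.

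\textbf{Per-state bound.} Fix a state $X_{<i}$ and write $\varepsilon(x_i)=\log q(x_i\mid X_{<i})-\log q^*(x_i\mid X_{<i})$. By hypothesis $\varepsilon$ takes values in an interval $[a,a+2\delta]$. Set $Y=q^*/q=e^{-\varepsilon}$, viewed as a random variable under $q^*(\cdot\mid X_{<i})$. Normalisation of $q$ gives $\mathbb{E}_{q^*}[1/Y]=1$, and by definition $1+\chi^2_i=\mathbb{E}_{q^*}[Y]$. The ratio of the largest to the smallest value of $Y$ is at most $e^{2\delta}$. The Kantorovich inequality $\mathbb{E}[Y]\,\mathbb{E}[1/Y]\le (M+m')^2/(4Mm')$ for $Y\in[m',M]$ then yields
\[
1+\chi^2_i(X_{<i})\le \frac{(1+e^{2\delta})^2}{4e^{2\delta}}=\cosh^2\delta .
\]
The point is that the unknown offset $a$ cancels.

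\textbf{Compounding.} Since $\Lambda_i=\Lambda_{i-1}\,q^*(x_i\mid X_{<i})/q(x_i\mid X_{<i})$, conditioning on $X_{<i}$ gives
\[
\mathbb{E}_q\bigl[\Lambda_i^2\mid X_{<i}\bigr]=\Lambda_{i-1}^2\bigl(1+\chi^2_i(X_{<i})\bigr)\le \cosh^2\delta\;\Lambda_{i-1}^2 .
\]
This is the same step that produces the $i$-th term of \eqref{eq:rows}. Iterating from $\Lambda_0=1$ to $\Lambda_m=w/Z$ gives $\mathbb{E}_q[w^2]/Z^2\le\cosh^{2m}\delta$. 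By Proposition~\ref{prop:ess} this quantity equals $e^{D_2(p\,\|\,q)}$, so
\[
D_2(p\,\|\,q)\le 2m\log\cosh\delta\le m\delta^2,
\]
using $\log\cosh\delta\le\delta^2/2$. The last inequality follows from $\cosh\delta\le e^{\delta^2/2}$, which is a termwise comparison of the two power series.

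\textbf{Main obstacle.} The delicate step is the per-state bound. A naive Taylor argument only gives $\chi^2_i\approx\operatorname{Var}(\varepsilon)\le\delta^2$ up to higher-order terms, which would not yield the clean constant. If the Kantorovich route gave trouble, I would instead prove the bound directly: convexity of $e^{-\varepsilon}$ on the interval says the maximum of $\mathbb{E}[e^{-\varepsilon}]$, subject to $\mathbb{E}[e^{\varepsilon}]=1$, is attained by a two-point law at the endpoints. Optimising that two-point law recovers $\cosh^2\delta$.
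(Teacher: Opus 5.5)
Your proof is correct and follows the paper's argument. The steps are the same: the Kantorovich inequality gives $1+\chi^2_i\le\cosh^2\delta$ at each state, this compounds over the $m$ rows to $e^{D_2(p\,\|\,q)}\le\cosh^{2m}\delta$, and $\log\cosh\delta\le\delta^2/2$ finishes. The one difference is direction: you compound forward through $\mathbb{E}_q[\Lambda_i^2\mid X_{<i}]\le\cosh^2\delta\,\Lambda_{i-1}^2$, while the paper runs a backward recursion on $A(X_{<i})$ with a maximum over children, and both work because the per-state bound is uniform.
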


\begin{proof}[Proof of Theorem~\ref{thm:rows} and Corollary~\ref{cor:rows}]
Along a trajectory, $\Lambda_i/\Lambda_{i-1}=Z(X_{\le i})/(Z(X_{<i})\,q(x_i\mid X_{<i}))=q^*(x_i\mid X_{<i})/q(x_i\mid X_{<i})$. Conditionally on $X_{<i}$ this ratio has mean $\sum_{x_i}q^*(x_i\mid X_{<i})=1$ and second moment $1+\chi^2_i(X_{<i})$, so $\Lambda_i$ is a martingale under $q$ and $\mathbb{E}_q[\Lambda_i^2]-\mathbb{E}_q[\Lambda_{i-1}^2]=\mathbb{E}_q[\Lambda_{i-1}^2\chi^2_i(X_{<i})]$. Summing from $\Lambda_0=1$ to $\Lambda_m=w/Z$ gives \eqref{eq:rows}. For the corollary, write $q(x_i\mid X_{<i})\propto q^*(x_i\mid X_{<i})e^{\epsilon(x_i)}$ with $\max\epsilon-\min\epsilon\le2\delta$ at the state. Then $1+\chi^2_i=\mathbb{E}_{q^*}[e^{\epsilon}]\,\mathbb{E}_{q^*}[e^{-\epsilon}]\le\cosh^2\delta$ by the Kantorovich inequality for a positive variable with range ratio at most $e^{2\delta}$. Let $A(X_{<i})=\mathbb{E}_q[(\Lambda_m/\Lambda_{i-1})^2\mid X_{<i}]$ be the second moment of the likelihood ratio of the rows from $i$ on, so that $A(X)=1$ at complete matrices and $A(X_{<i})=\sum_{x_i}q^*(x_i\mid X_{<i})^2/q(x_i\mid X_{<i})\,A(X_{\le i})\le\cosh^2\delta\,\max_{x_i}A(X_{\le i})$. Induction over the rows gives $e^{D_2(p\|q)}=A(X_{<1})\le\cosh^{2m}\delta$, and $\log\cosh\delta\le\delta^2/2$.
\end{proof}

A per-row error of $\delta$ nats costs at most $m\delta^2$ nats, so a fixed error per row compounds linearly, which is why analytically designed proposals lose fit on larger tables \citep{harrison2013importance,verhelst2008efficient}.

\begin{figure}[htbp]
\centering
\includegraphics[width=0.46\linewidth]{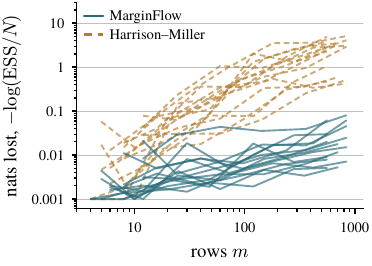}
\caption{Nats lost against the number of rows for the Harrison--Miller proposal and MarginFlow on all 15 groups of power-law margins in the dense band.}
\label{fig:rows}
\end{figure}

Figure~\ref{fig:rows} extends Figure~\ref{fig:rows_main} to all 15 groups of the dense band. In every group where the Harrison--Miller proposal loses at least half a nat, its loss grows with a slope near one. MarginFlow's stays under 0.1 nat in all but three groups, the widest tables of the extreme-sum and bimodal families, where it reaches 0.7 nat. The near-regular family shows no growth for any proposal, since every proposal is already close to $q^*$ on every row there.

\subsection{Extrapolation on the family where classical SIS fails}
\label{app:bssv}

\citet{bezakova2012negative} prove that on the margins $r=(1,\dots,1,\lfloor\beta m\rfloor)$ and $c=(1,\dots,1,\lfloor\gamma m\rfloor)$ with $m+1$ rows the conditional Poisson proposal underestimates the count by an exponential factor unless it is run for an exponential number of draws. The pool holds this family at $m\le60$, which is at most $84\times61$ after transposition. The six test margins take $m$ to 120, 200 and 300 at $(\beta,\gamma)=(0.5,0.25)$ and $(0.25,0.5)$, up to $376\times301$, so they ask MarginFlow to extrapolate to five times the $m$ it is trained on. Every state admits only two feasible row types, so the effective sample fraction is exact for every proposal. Table~\ref{tab:bssv} reports it as the number of draws per effective sample, $N/\mathrm{ESS}=e^{D_2(p\,\|\,q)}$, the price of one uniform matrix.

\begin{table}[htbp]
\caption{Draws per effective sample on the six test margins of the family of \citet{bezakova2012negative}, exact for every proposal. MarginFlow is the median of the three seeds with their range.}
\label{tab:bssv}
\centering
\small
\begin{tabular}{lccc}
\toprule
Margins & CDHL & Harrison--Miller & MarginFlow (ours) \\
\midrule
$121\times91$ & $1.0\times10^{11}$ & 10.5 & 1.02 (1.01--1.02) \\
$151\times121$ & $1.1\times10^{10}$ & 2.3 & 1.02 (1.01--1.08) \\
$201\times151$ & $1.9\times10^{19}$ & 116.7 & 1.06 (1.02--1.16) \\
$251\times201$ & $5.1\times10^{17}$ & 4.8 & 1.07 (1.01--1.51) \\
$301\times226$ & $4.5\times10^{29}$ & 3011 & 1.20 (1.08--1.53) \\
$376\times301$ & $2.3\times10^{27}$ & 13.6 & 1.21 (1.02--2.87) \\
\bottomrule
\end{tabular}
\end{table}

The theorem is visible in the CDHL column, which pays $e^{63}$ draws per effective sample at $376\times301$. The untrained start pays between 2 and 3000. MarginFlow, which never sees more than 84 rows of this family, pays 1.2 at $376\times301$ for its median seed, so the learned proposal extrapolates along the rows where the analytically designed one does not.

\subsection{Counting with MarginFlow against a Markov chain}
\label{app:count}

We compare MarginFlow as a counter with the MCMC route on 18 test margins whose count is known exactly, among them the Southern Women table of \citet{davis1941deep}, with log counts from 8 to 266. A Markov chain gives uniform samples and no count, so a counter has to be assembled from it. We use the self-reducibility argument of \citet{jerrum1986random}. Any one table has probability $1/|\Omega(r,c)|$ under the uniform distribution. Fixing its entries one at a time factors this probability into the probabilities that each entry takes its value given the entries fixed before it. Each factor is a marginal of the uniform distribution over the tables that agree with the fixed entries, and a Curveball chain \citep{strona2014fast} samples that distribution. The log count is the negative sum of the estimated log factors. We run it for 10 and for 60 minutes of one core with four seeds and report the standard deviation of its log count across the seeds. MarginFlow and CDHL draw $N=4000$ matrices, the network on one GPU and CDHL on one core, and Table~\ref{tab:count} reports the standard deviation of their log counts across 16 repetitions.

\begin{table}[htbp]
\caption{Standard deviation of the log count on 18 held-out margins with known counts, as median and range over the margins, and wall-clock time per margin.}
\label{tab:count}
\centering
\small
\begin{tabular}{lcc}
\toprule
Counter & Standard deviation of the log count & Time per margin \\
\midrule
MarginFlow, $N=4000$, one GPU & 0.0007 [0.0002, 0.0022] & 8.5 s \\
CDHL, $N=4000$, one core & 0.017 [0.004, 0.065] & 42 s \\
Curveball chain, one core & 0.038 [0.001, 0.081] & 600 s \\
Curveball chain, one core & 0.015 [0.002, 0.051] & 3600 s \\
\bottomrule
\end{tabular}
\end{table}

The hour-long chain shows no bias its standard error can detect, and that standard error is calibrated. Its log count still scatters about twenty times more than MarginFlow's, in 3600 seconds of one core against 8.5 seconds on a GPU. CDHL at the same $N$ is as precise as the hour-long chain in under a minute, which is the case for SIS that \citet{chen2005sequential} made. MarginFlow cuts the scatter of CDHL by a further factor of fifteen on the median margin of Table~\ref{tab:count_full}.

Table~\ref{tab:count_full} lists the 18 margins with the exact log count from the dynamic programming of \citet{miller2013exact}. For MarginFlow and CDHL it gives the mean error and standard deviation of the log count over 16 repetitions of $N=4000$ draws, and for the chain the same over four seeds at each budget. On all 18 margins MarginFlow's mean error is within two standard errors of zero.

\begin{table}[htbp]
\caption{Counting on the 18 margins with known counts. Mean error and standard deviation of the log count for MarginFlow and CDHL are over 16 repetitions of $N=4000$ draws, and for the chain over four seeds at 10 and at 60 minutes of one core.}
\label{tab:count_full}
\centering
\small
\setlength{\tabcolsep}{3pt}
\begin{tabular}{llrcccccccc}
\toprule
& & & \multicolumn{2}{c}{MarginFlow} & \multicolumn{2}{c}{CDHL} & \multicolumn{2}{c}{Chain, 10 min} & \multicolumn{2}{c}{Chain, 60 min} \\
\cmidrule(lr){4-5}\cmidrule(lr){6-7}\cmidrule(lr){8-9}\cmidrule(lr){10-11}
Margins & Size & $\log Z$ & error & sd & error & sd & error & sd & error & sd \\
\midrule
power law 0.6 & $6\times6$ & 8.4 & 0.0000 & 0.0003 & +0.0017 & 0.0039 & +0.001 & 0.001 & 0.000 & 0.002 \\
tight columns & $16\times14$ & 51.6 & 0.0000 & 0.0004 & +0.0010 & 0.0042 & +0.005 & 0.017 & -0.005 & 0.012 \\
species by site & $26\times12$ & 70.7 & +0.0001 & 0.0014 & +0.0091 & 0.0205 & +0.001 & 0.010 & -0.011 & 0.015 \\
bimodal & $24\times12$ & 81.9 & -0.0005 & 0.0017 & +0.0020 & 0.0179 & -0.015 & 0.025 & +0.007 & 0.013 \\
Southern Women & $18\times14$ & 85.2 & +0.0001 & 0.0006 & -0.0002 & 0.0067 & -0.017 & 0.046 & +0.004 & 0.015 \\
power law 0.6 & $24\times12$ & 88.3 & -0.0002 & 0.0008 & +0.0028 & 0.0084 & +0.035 & 0.025 & +0.004 & 0.011 \\
power law 1.2 & $72\times6$ & 94.7 & -0.0007 & 0.0016 & -0.0155 & 0.0612 & +0.022 & 0.035 & +0.018 & 0.007 \\
bimodal & $40\times8$ & 96.7 & +0.0002 & 0.0010 & +0.0007 & 0.0159 & +0.014 & 0.062 & +0.004 & 0.020 \\
species by site & $40\times12$ & 97.3 & -0.0003 & 0.0021 & -0.0009 & 0.0317 & 0.000 & 0.050 & +0.007 & 0.007 \\
bimodal & $20\times20$ & 101.0 & -0.0001 & 0.0022 & -0.0001 & 0.0317 & +0.036 & 0.047 & +0.011 & 0.018 \\
power law 1.0 & $72\times6$ & 106.3 & -0.0003 & 0.0011 & -0.0183 & 0.0653 & +0.015 & 0.033 & +0.002 & 0.005 \\
power law 1.4 & $40\times20$ & 118.3 & 0.0000 & 0.0004 & +0.0057 & 0.0126 & +0.006 & 0.038 & -0.017 & 0.018 \\
power law 0.6 & $40\times20$ & 119.5 & 0.0000 & 0.0003 & +0.0065 & 0.0071 & +0.032 & 0.023 & +0.003 & 0.025 \\
Web of Life & $39\times15$ & 119.9 & +0.0001 & 0.0009 & +0.0066 & 0.0323 & +0.026 & 0.070 & +0.001 & 0.033 \\
power law 0.8 & $72\times6$ & 123.0 & +0.0001 & 0.0004 & -0.0123 & 0.0317 & +0.048 & 0.081 & -0.020 & 0.007 \\
near-regular & $20\times20$ & 142.2 & 0.0000 & 0.0002 & -0.0002 & 0.0044 & +0.030 & 0.067 & -0.012 & 0.042 \\
tight columns & $26\times22$ & 149.0 & 0.0000 & 0.0005 & +0.0011 & 0.0040 & -0.004 & 0.038 & -0.011 & 0.051 \\
extreme sums & $560\times4$ & 266.3 & +0.0001 & 0.0006 & -0.0132 & 0.0257 & +0.001 & 0.058 & -0.012 & 0.032 \\
\bottomrule
\end{tabular}
\end{table}

The table whose probability the counter factors is not fixed in advance. The counter builds it entry by entry, starting with the first row. At each stage two copies of the chain run on the tables that agree with the entries fixed so far, each burnt in on its own. The first copy estimates the marginal of every free entry of the row and picks the one whose marginal is closest to zero or one, and the second copy, which never sees that choice, estimates the marginal of the chosen entry at its more likely value. The entry is then fixed at that value, so every factor of the product stays close to one and its log is estimated with small variance. An entry whose column sum is zero or equals the number of remaining rows is forced and costs no stage. When the first row is complete it is removed and its ones are subtracted from the column sums, and the counter continues on the rows below until none remain. A partially fixed first row breaks the irreducibility of plain Curveball trades. Trades that involve the first row therefore pick their partner uniformly among the rows that can exchange something with it, and are accepted with a Metropolis ratio, which keeps the uniform distribution invariant. Each estimated marginal is a time average of its chain, its variance is estimated by batch means, and the reported standard error adds these variances over the stages as if the stages were independent. The total number of trades is set by the budget from a pilot run and divided evenly over the stages, which number between 17 and 591 here and grow linearly with the rows.

Over the 72 runs at 60 minutes the mean error is $-0.002$ with standard error $0.003$, and the error divided by the reported standard error has standard deviation $0.98$. At 10 minutes the mean error is $+0.013$ with standard error $0.005$, a bias that a tenfold longer burn-in does not change and that the sixfold budget removes. MarginFlow's standard deviation is within a factor of $1.3$ of $\sqrt{(e^{D_2}-1)/N}$ from its exact effective sample fraction on every margin.

\subsection{What a draw costs}
\label{app:cost}

Table~\ref{tab:cost} times one repetition of $N=4000$ draws on 13 held-out margins across the size range and turns each time into effective draws per second. The network samples on one A100 and the analytically designed proposals run on the CPU, the standard practice for each. The smallest margin finishes within a tenth of a second for every proposal, and on the other twelve the Harrison--Miller proposal takes 1.2 to 4.7 times longer than MarginFlow. Part of that lead is the hardware, since a third to a half of the Harrison--Miller time goes into its closed-form weights and a GPU implementation would remove most of it. But no implementation changes how many of its draws a proposal keeps, and the last two columns fold that fraction into the time. There MarginFlow delivers 1.2 to 9.2 times the effective draws per second of the Harrison--Miller proposal on the same twelve margins, and the gap is widest, 9.2 on the $40\times20$ power-law margin and 6.2 at $840\times12$, where the Harrison--Miller proposal also loses the most draws. Training is paid once, about 25 hours on a machine with eight A100s per seed (Appendix~\ref{app:training}), and none of it recurs at deployment.

\begin{table}[htbp]
\caption{Seconds for one repetition of $N=4000$ draws on held-out margins across the size range, on one machine with eight CPU threads and one A100 for the network's forward pass, and effective draws per second, the repetition's effective sample size divided by its time. Types per state is the largest number of feasible row types at any state the draws visited. The cap of Section~\ref{sec:setup} is checked on a census of four draws, so a full run can exceed it.}
\label{tab:cost}
\centering
\small
\setlength{\tabcolsep}{3pt}
\begin{tabular}{llccccccc}
\toprule
& & & \multicolumn{4}{c}{Seconds per repetition} & \multicolumn{2}{c}{Effective draws per second} \\
\cmidrule(lr){4-7}\cmidrule(lr){8-9}
Margin & Size & \begin{tabular}[c]{@{}c@{}}Types\\per state\end{tabular} & \begin{tabular}[c]{@{}c@{}}Harrison--\\Miller\end{tabular} & CDHL & Uniform & \begin{tabular}[c]{@{}c@{}}Margin-\\Flow\end{tabular} & \begin{tabular}[c]{@{}c@{}}Harrison--\\Miller\end{tabular} & \begin{tabular}[c]{@{}c@{}}Margin-\\Flow\end{tabular} \\
\midrule
Saturated columns & $12\times10$ & 15 & 0.03 & 0.03 & 0.03 & 0.04 & 143\,000 & 94\,500 \\
Species by site & $20\times15$ & 2050 & 2.6 & 2.6 & 1.6 & 0.57 & 1\,320 & 6\,980 \\
Web of Life & $35\times29$ & 44063 & 13.8 & 13.8 & 11.4 & 9.0 & 282 & 444 \\
Power-law & $40\times20$ & 6924 & 3.8 & 3.7 & 3.6 & 0.90 & 471 & 4\,320 \\
Power-law & $40\times40$ & 14198 & 2.6 & 2.7 & 3.7 & 0.65 & 1\,460 & 6\,160 \\
Near-regular & $40\times40$ & 130722 & 78 & 72 & 112 & 63 & 51.4 & 63.9 \\
Web of Life & $42\times30$ & 20487 & 8.8 & 8.9 & 9.7 & 4.9 & 441 & 808 \\
Species by site & $51\times18$ & 3572 & 6.1 & 5.6 & 4.3 & 1.3 & 618 & 3\,080 \\
Species by site & $53\times28$ & 1713 & 3.8 & 3.5 & 4.8 & 0.94 & 1\,040 & 4\,270 \\
Power-law & $72\times6$ & 20 & 2.4 & 2.3 & 2.7 & 0.71 & 1\,470 & 5\,600 \\
Species by site & $112\times5$ & 10 & 2.6 & 2.1 & 4.1 & 0.82 & 1\,510 & 4\,850 \\
Bimodal & $600\times20$ & 18152 & 367 & 317 & 191 & 104 & 10.9 & 38.5 \\
Power-law & $840\times12$ & 924 & 450 & 368 & 168 & 97 & 6.6 & 41.1 \\
\bottomrule
\end{tabular}
\end{table}

\subsection{Results by family and on the hardest margins}
\label{app:family}

Table~\ref{tab:family} breaks Table~\ref{tab:main} down by evaluation and by family, and Table~\ref{tab:hard} lists the 56 margins of its hardest tier one by one. The gains sit in the families with uneven margins, the extreme sums and the heavier power laws, and in the species-by-site tables. There the tenth percentile of the post-hoc best falls to between 30\% and 58\%, and MarginFlow's stays above 93\%. The near-regular and tight-column families are easy for every proposal, and the two tie on all but one of them. On the hardest tier the configuration that the post-hoc best selects changes from margin to margin, across four of the proposals and three exponents. The three seeds of MarginFlow land within about 3 points of each other on the median margin.

In the configuration column, CP is the conditional Poisson proposal with the ratio weights and CP-B with the weights of \citet{blanchet2009efficient}, HM-C and HM-G are the two proposals of \citet{harrison2013importance}, and ME is maximum entropy, each followed by its exponent $u$.

\begin{table}[htbp]
\caption{Median effective sample fraction (\%) on the 1190 held-out margins by evaluation and by family, with the tenth percentile of the post-hoc best and of MarginFlow. W/T/L is as in Table~\ref{tab:main}.}
\label{tab:family}
\centering
\small
\setlength{\tabcolsep}{3pt}
\begin{tabular}{lrccccccc}
\toprule
 & & & & & & \multicolumn{2}{c}{Tenth percentile} & \\
\cmidrule(lr){7-8}
 & & & Harrison-- & Post-hoc & Margin- & Post-hoc & Margin- & \\
Margins & $n$ & CDHL & Miller & best & Flow & best & Flow & W/T/L \\
\midrule
\multicolumn{9}{l}{\emph{by evaluation}} \\
\quad evaluated exactly & 681 & 65.4 & 98.5 & 99.7 & 99.9 & 88.5 & 98.9 & 176/502/3 \\
\quad estimated from draws & 509 & 12.3 & 92.6 & 98.3 & 99.8 & 35.4 & 95.8 & 239/270/0 \\
\midrule
\multicolumn{9}{l}{\emph{synthetic, new seeds}} \\
\quad power law 0.6 & 106 & 27.1 & 97.1 & 99.7 & 99.9 & 76.7 & 99.5 & 27/79/0 \\
\quad power law 1.0 & 107 & 5.0 & 91.7 & 99.4 & 99.8 & 57.9 & 98.8 & 35/72/0 \\
\quad power law 1.4 & 107 & 4.5 & 82.5 & 99.3 & 99.7 & 38.8 & 97.3 & 39/68/0 \\
\quad near-regular & 111 & 57.5 & 99.4 & 100.0 & 99.9 & 99.2 & 99.8 & 1/110/0 \\
\quad extreme sums & 112 & 73.5 & 86.4 & 88.2 & 99.4 & 30.5 & 94.3 & 85/27/0 \\
\quad bimodal (2) & 109 & 22.1 & 97.2 & 98.9 & 99.8 & 46.5 & 97.5 & 45/64/0 \\
\quad bimodal (2, wide) & 109 & 38.6 & 98.5 & 99.7 & 99.9 & 76.9 & 98.4 & 33/76/0 \\
\quad bimodal (3) & 106 & 24.9 & 97.3 & 98.3 & 99.8 & 81.1 & 99.0 & 51/54/1 \\
\quad tight columns & 13 & 95.1 & 99.8 & 99.8 & 99.9 & 99.5 & 99.7 & 0/13/0 \\
\midrule
\multicolumn{9}{l}{\emph{synthetic, interpolated}} \\
\quad power law 0.8 & 24 & 36.1 & 96.7 & 99.5 & 99.8 & 92.6 & 99.5 & 8/16/0 \\
\quad power law 1.2 & 25 & 37.3 & 93.4 & 99.6 & 99.8 & 82.1 & 99.0 & 9/16/0 \\
\quad bimodal (2) & 26 & 63.7 & 99.1 & 99.8 & 99.9 & 66.9 & 84.9 & 7/19/0 \\
\quad bimodal (3) & 25 & 67.3 & 99.1 & 99.8 & 99.9 & 97.1 & 99.7 & 4/21/0 \\
\midrule
\multicolumn{9}{l}{\emph{real tables}} \\
\quad species by site & 88 & 42.6 & 94.8 & 97.4 & 99.6 & 53.4 & 94.8 & 43/43/2 \\
\quad Web of Life & 117 & 62.0 & 98.0 & 99.6 & 99.8 & 89.6 & 98.9 & 26/91/0 \\
\quad psychometric and social & 5 & 4.0 & 92.5 & 98.6 & 99.4 & 48.8 & 90.7 & 2/3/0 \\
\bottomrule
\end{tabular}
\end{table}

\begin{table}[p]
\caption{The 56 held-out margins on which the post-hoc best of the 31 configurations keeps under 37\% of its draws, ordered by that fraction. Effective sample fractions in \%, MarginFlow as the median of its three seeds followed by the three values.}
\label{tab:hard}
\centering
\fontsize{8.5}{10.2}\selectfont
\setlength{\tabcolsep}{2.3pt}
\begin{tabular}{llccccccc}
\toprule
 & & & & Harrison-- & Post-hoc & & Margin- & \\
Margins & Size & Density & CDHL & Miller & best & Configuration & Flow & Seeds \\
\midrule
power law 1.4 & $840\times6$ & dense & 0.1 & 0.6 & 0.4 & HM-G 1 & 94.0 & 96.5, 93.2, 94.0 \\
extreme sums & $840\times12$ & dense & 0.2 & 0.7 & 0.5 & HM-C 1 & 57.6 & 61.1, 32.7, 57.6 \\
power law 1.4 & $420\times6$ & dense & 0.1 & 2.7 & 1.4 & ME 0.75 & 98.2 & 97.6, 98.2, 98.2 \\
power law 1.4 & $560\times8$ & dense & 0.2 & 0.7 & 1.6 & HM-C 1 & 94.0 & 97.7, 93.2, 94.0 \\
bimodal (2) & $600\times20$ & dense & 0.0 & 0.4 & 2.3 & ME 1 & 48.8 & 74.1, 48.8, 45.9 \\
bimodal (2) & $840\times12$ & dense & 0.2 & 2.3 & 2.5 & HM-C 1 & 94.1 & 95.4, 94.1, 91.9 \\
bimodal (2, wide) & $840\times12$ & dense & 0.1 & 2.1 & 2.8 & HM-C 1 & 84.6 & 88.6, 83.9, 84.6 \\
extreme sums & $840\times6$ & dense & 0.3 & 3.6 & 3.0 & HM-C 1 & 95.6 & 95.6, 89.9, 96.0 \\
power law 1.4 & $840\times12$ & dense & 0.2 & 1.6 & 3.3 & HM-C 1 & 92.3 & 93.5, 90.8, 92.3 \\
species by site & $152\times15$ & -- & 0.2 & 1.2 & 3.8 & HM-G 1 & 77.6 & 83.1, 77.6, 73.7 \\
power law 1.0 & $840\times12$ & dense & 0.2 & 5.3 & 4.3 & HM-C 1 & 95.5 & 96.7, 95.1, 95.5 \\
power law 1.0 & $560\times8$ & dense & 0.4 & 5.9 & 4.8 & HM-C 1 & 98.0 & 98.4, 98.0, 97.7 \\
power law 1.0 & $420\times6$ & dense & 0.3 & 6.2 & 5.0 & ME 0.75 & 98.8 & 98.4, 98.8, 98.9 \\
bimodal (3) & $840\times12$ & dense & 0.3 & 4.7 & 5.3 & HM-C 1 & 96.8 & 98.2, 95.7, 96.8 \\
bimodal (2) & $144\times12$ & dense & 0.3 & 4.4 & 6.1 & ME 1 & 59.6 & 74.1, 56.6, 59.6 \\
extreme sums & $600\times20$ & mid & 1.0 & 6.8 & 6.1 & HM-C 1 & 89.2 & 89.2, 85.9, 90.7 \\
power law 1.0 & $840\times6$ & dense & 0.1 & 4.6 & 6.2 & HM-C 1 & 98.5 & 98.5, 98.6, 98.4 \\
extreme sums & $700\times5$ & dense & 0.5 & 6.7 & 6.3 & HM-C 1 & 95.0 & 96.5, 89.9, 95.0 \\
species by site & $198\times15$ & -- & 0.2 & 2.3 & 6.8 & HM-G 1 & 83.1 & 83.1, 83.9, 79.7 \\
bimodal (2) & $360\times12$ & dense & 0.4 & 7.3 & 7.5 & HM-C 1 & 87.4 & 94.2, 87.4, 85.9 \\
extreme sums & $560\times8$ & dense & 0.8 & 7.5 & 8.2 & HM-C 1 & 85.6 & 85.3, 86.5, 85.6 \\
extreme sums & $840\times12$ & mid & 1.1 & 7.9 & 8.7 & HM-C 1 & 89.5 & 92.4, 89.5, 89.2 \\
power law 1.4 & $700\times5$ & dense & 0.1 & 3.0 & 9.5 & HM-G 1 & 98.2 & 97.2, 98.6, 98.2 \\
extreme sums & $600\times20$ & dense & 1.0 & 9.9 & 9.8 & HM-C 1 & 82.1 & 89.8, 82.1, 78.1 \\
bimodal (2, wide) & $240\times20$ & dense & 0.1 & 4.1 & 9.9 & ME 1 & 59.1 & 68.9, 59.1, 57.5 \\
species by site & $82\times19$ & -- & 2.0 & 10.0 & 10.0 & HM-C 1 & 89.7 & 92.9, 89.7, 88.1 \\
bimodal (2, wide) & $360\times12$ & dense & 0.2 & 8.7 & 10.1 & HM-C 1 & 88.1 & 92.1, 88.1, 87.9 \\
power law 1.0 & $700\times5$ & dense & 0.2 & 3.6 & 10.2 & HM-G 1 & 98.8 & 98.4, 99.0, 98.8 \\
power law 1.4 & $180\times6$ & dense & 0.5 & 3.0 & 10.4 & HM-G 1 & 98.3 & 98.3, 98.5, 98.1 \\
extreme sums & $360\times12$ & dense & 1.5 & 11.4 & 12.6 & HM-C 1 & 96.5 & 97.1, 94.8, 96.5 \\
extreme sums & $420\times6$ & dense & 0.9 & 12.9 & 12.7 & HM-C 1 & 96.9 & 97.4, 96.6, 96.9 \\
power law 1.4 & $360\times12$ & dense & 1.2 & 14.4 & 13.1 & HM-C 1 & 96.1 & 96.7, 93.6, 96.1 \\
bimodal (2) & $240\times20$ & dense & 0.1 & 3.6 & 13.4 & ME 1 & 69.6 & 87.7, 69.6, 61.7 \\
extreme sums & $560\times4$ & dense & 0.1 & 17.6 & 17.6 & HM-C 1 & 98.1 & 99.0, 97.6, 98.1 \\
power law 1.4 & $240\times8$ & dense & 1.3 & 17.7 & 18.2 & HM-C 1 & 98.7 & 98.7, 98.1, 98.8 \\
bimodal (2) & $560\times8$ & dense & 1.3 & 20.9 & 20.2 & HM-C 1 & 97.6 & 97.8, 97.6, 96.1 \\
bimodal (2, wide) & $100\times20$ & dense & 0.3 & 9.1 & 22.7 & ME 1 & 71.3 & 84.5, 70.1, 71.3 \\
bimodal (2) & $60\times12$ & dense & 1.6 & 19.8 & 24.3 & HM-C 1.25 & 79.5 & 88.3, 79.5, 77.9 \\
species by site & $37\times26$ & -- & 1.9 & 13.6 & 25.0 & CP-B 1.25 & 68.4 & 80.3, 68.4, 64.1 \\
bimodal (2) & $700\times5$ & dense & 0.9 & 26.4 & 27.1 & HM-C 1 & 93.9 & 98.4, 93.9, 93.3 \\
extreme sums & $360\times12$ & mid & 7.4 & 27.8 & 27.4 & HM-C 1 & 95.3 & 96.4, 93.6, 95.3 \\
power law 1.4 & $350\times5$ & dense & 0.3 & 5.5 & 30.1 & HM-G 1 & 98.9 & 98.3, 98.9, 99.2 \\
extreme sums & $840\times6$ & sparse & 17.2 & 29.6 & 30.3 & HM-C 1 & 93.6 & 95.8, 93.5, 93.6 \\
bimodal (3) & $100\times20$ & dense & 3.1 & 31.2 & 30.7 & HM-C 1 & 92.4 & 93.4, 92.4, 90.5 \\
bimodal (2) & $840\times6$ & dense & 0.8 & 31.5 & 30.7 & HM-C 1 & 99.1 & 99.1, 98.8, 99.2 \\
power law 1.4 & $60\times12$ & dense & 10.4 & 34.5 & 30.8 & HM-C 1 & 95.7 & 96.6, 92.3, 95.7 \\
extreme sums & $240\times8$ & dense & 10.8 & 31.2 & 31.9 & HM-C 1 & 80.1 & 82.9, 79.3, 80.1 \\
bimodal (2) & $144\times12$ & dense & 4.0 & 32.6 & 32.5 & HM-C 1 & 93.3 & 97.4, 93.3, 92.1 \\
bimodal (2, wide) & $144\times12$ & dense & 1.2 & 33.1 & 34.1 & HM-C 1 & 93.7 & 95.9, 93.7, 93.1 \\
bimodal (3) & $360\times12$ & dense & 1.9 & 34.3 & 34.1 & HM-C 1 & 97.9 & 98.7, 97.7, 97.9 \\
bimodal (2) & $240\times8$ & dense & 4.6 & 35.1 & 34.3 & HM-C 1 & 97.5 & 97.7, 97.5, 95.0 \\
extreme sums & $240\times20$ & mid & 11.7 & 36.1 & 35.2 & HM-C 1 & 93.6 & 93.6, 87.4, 94.4 \\
Web of Life & $164\times18$ & -- & 1.6 & 36.7 & 35.4 & HM-C 1 & 94.9 & 95.0, 92.9, 94.9 \\
power law 1.0 & $240\times8$ & dense & 1.7 & 33.5 & 36.2 & HM-C 1 & 99.1 & 99.1, 98.8, 99.2 \\
species by site & $650\times4$ & -- & 0.1 & 1.3 & 36.6 & HM-G 1 & 96.4 & 91.8, 96.4, 97.5 \\
extreme sums & $560\times8$ & mid & 14.8 & 36.6 & 36.8 & HM-C 1 & 98.7 & 98.7, 96.5, 98.9 \\
\bottomrule
\end{tabular}
\end{table}

\FloatBarrier
\section{Discussion}
\label{app:discussion}

In use MarginFlow stands where the analytically designed proposal stands. An ecologist testing nestedness draws matrices with the observed margins and averages a statistic over them, a psychometrician doing conditional inference in the Rasch model draws item-response tables with the observed scores, and a social scientist tests a motif count against affiliation networks with the same degrees. Each of these is a weighted average over draws. The network makes the draws and returns the weights, and the user needs fewer of them for the same error. The same draws and weights feed sequential Monte Carlo with resampling, and the proposal serves as the independence proposal of a Metropolis--Hastings chain that leaves the uniform distribution invariant.

The zero-shot network is also a starting point. Section~\ref{sec:count} shows that the loss trains on one margin from its own draws, without the count, so a user with one hard margin can continue training on it. The loss reports the variance of the log weights on that margin as it falls, and the user watches the proposal improve without ever knowing the answer.

The reach of the network is set by how it reads a state. It sees the feasible row types, and the softmax over them makes every step exact and the weighted count unbiased. The number of types, however, grows with the number of distinct reduced column sums. We train where it stays under $2\times10^4$ per state and evaluate where it stays under $10^5$, on a test set drawn under the same cap for every proposal in the comparison. Where the types are far more numerous, the policy of Appendix~\ref{app:training} that places the ones of a row group by group is exact without any enumeration of types, at a higher cost per training step, and it is the natural next version of the network.

MarginFlow itself reaches further. It rests on two properties, that a matrix is built one row at a time and that the remainder after each row is again an instance of the problem, and both hold beyond 0-1 matrices with fixed margins. Contingency tables with integer entries, graphs with prescribed degrees, and tables with structural zeros share them. The network then has to read a column through more than its reduced sum, since a structural zero, or the symmetry of a graph's adjacency matrix, fixes some entries of a column and leaves others free. With that input, one network again serves every instance of each. Further out, perfect matchings in bipartite graphs, Latin rectangles, and constrained lattice paths are built the same way, with a feasibility check at every piece. Each of them has its proposals designed by hand, one family at a time, and MarginFlow can learn one proposal for the whole family from the draws the sampler makes anyway.

\section{Extended related work}
\label{app:related}

\paragraph{Sampling and counting with fixed margins.}
Exact dynamic programming over the multiset of reduced column sums gives the exact count and exact uniform draws \citep{miller2013exact}, and its state graph is the type graph of Section~\ref{sec:network}. Markov-chain methods target the uniform distribution over matrices with fixed margins \citep{verhelst2008efficient,gotelli2012statistical,strona2014fast,fosdick2018configuring}. Polynomial mixing bounds for the swap chain cover restricted classes of margins \citep{kannan1999simple,erdos2022mixing}, and \citet{fu2026spectral} establish such a bound for the lazy swap chain under arbitrary feasible margins. \citet{nie2026snake} introduce the Snake sampler, prove an analogous guarantee for its lazy version, and compare it empirically with SIS in fixed-margin testing. Every SIS proposal since \citet{snijders1991enumeration} is analytically designed, from the conditional Poisson of \citet{chen2005sequential}, through the sparse-regime proposal of \citet{blanchet2009efficient}, the dead-end-free construction of \citet{blitzstein2011sequential} for graphs, and the asymptotic-enumeration family of \citet{harrison2013importance}, to the maximum entropy proposal of \citet{glasserman2023maximum}. \citet{bezakova2012negative} show margins on which the conditional Poisson proposal needs exponentially many draws to estimate the count, and the hard tier of Section~\ref{sec:main} holds margins on which every configuration of the sweep fails. For contingency tables with integer entries, \citet{jerdee2024improved} sharpen the count approximation behind the proposal, and the proposal stays analytically designed.

\paragraph{Learned proposals and amortized samplers.}
Learning a proposal from the sampler's own draws predates GFlowNets. \citet{gu2015neural} train the proposal of sequential Monte Carlo on its weighted particles, \citet{muller2019neural} train normalizing flows as importance samplers, \citet{wu2019solving} train autoregressive networks on lattice models, and \citet{nicoli2020asymptotically} correct such networks by importance weights so the weighted draws estimate the partition function without bias. Each trains one network per model. GFlowNets \citep{bengio2021flow,bengio2023gflownet} with trajectory balance \citep{whitammer2022trajectory} do the same for one reward, and on policy their expected gradient is that of the reverse Kullback-Leibler divergence \citep{richter2020vargrad,whitammer2023gflownets}, which Theorem~\ref{thm:grad} uses. The same objective now fine-tunes language models, where \citet{chen2026powerflow} match the policy to a power of the base model with a length-aware trajectory balance. \citet{deleu2024control} give the correction that keeps the terminal distribution right when an object has many construction paths, a case the row-by-row tree here excludes. \citet{zhao2024twisted} learn the twist of sequential Monte Carlo, the expected future weight of a partial sequence, which is the role the completion count plays here. \citet{choi2026reinforced} use the learned policy as the proposal kernel of sequential Monte Carlo, with a twist from the learned value function, one policy per target. Here the reward is one on every matrix, so the exact policy is a ratio of completion counts and the plain importance weights carry the count without resampling. MarginFlow also shares one policy across margins. \citet{boussif2025action} shorten long GFlowNet trajectories by merging recurring action sequences into one action, whereas here the action is a whole row from the outset and Proposition~\ref{prop:reduced} merges rows into types by the symmetry that the exact recursion already uses.

\paragraph{Amortization across instances.}
Inference compilation trains one proposal across the runs of a probabilistic program \citep{paige2016inference,le2017inference}, with simulations as the instances. \citet{zhang2023robust} condition a GFlowNet on a scheduling instance and train it across instances with the log-variance objective, which Section~\ref{sec:prelim} adopts, and \citet{kim2025gfacs} train one per problem class across its instances as a prior for ant colony search on seven combinatorial optimization problems. Every partial matrix met while sampling one margin is again an instance with its own reduced margins, so a network reading the remaining margins learns from every state of every trajectory through the pool. Trained this way on 1904 margins, MarginFlow runs zero-shot on 1190 held-out margins. Appendix~\ref{app:discussion} describes the extensions where a column carries more than its reduced sum.

\end{document}